\newtheorem{lemma}{Lemma}
\newtheorem{theorem}{Theorem}
\newtheorem{cor}{Corollary}
\def\eqref#1{equation~\ref{#1}}
\def\1{\bm{1}}
\DeclareMathAlphabet{\mathsfit}{\encodingdefault}{\sfdefault}{m}{sl}
\SetMathAlphabet{\mathsfit}{bold}{\encodingdefault}{\sfdefault}{bx}{n}
\title{RobustVLA: Robustness-Aware Reinforcement Post-Training for Vision-Language-Action Models
}
\author{
  Hongyin Zhang$^{1}$ \quad
  Shuo Zhang$^{1}$ \quad
  Junxi Jin$^{1}$  \quad
  Qixin Zeng$^{1}$  \quad
  Runze Li$^{1}$  \quad
  Donglin Wang$^{1}$\thanks{Corresponding author.} \\
  $^{1}$ Westlake University. \\
  \texttt{wangdonglin@westlake.edu.cn}
}
\begin{document}


\maketitle

\begin{abstract}
Vision-Language-Action (VLA) models have recently emerged as powerful general-purpose policies for robotic manipulation, benefiting from large-scale multi-modal pre-training.
However, they often fail to generalize reliably in out-of-distribution deployments, where unavoidable disturbances such as observation noise, sensor errors, or actuation perturbations become prevalent.
While recent Reinforcement Learning (RL)-based post-training provides a practical means to adapt pre-trained VLA models, existing methods mainly emphasize reward maximization and overlook robustness to environmental uncertainty.
In this work, we introduce \textbf{RobustVLA}, a lightweight online RL post-training method designed to explicitly enhance the resilience of VLA models.
Through a systematic robustness analysis, we identify two key regularizations: Jacobian regularization, which mitigates sensitivity to observation noise, and smoothness regularization, which stabilizes policies under action perturbations.
Extensive experiments across diverse robotic environments demonstrate that RobustVLA significantly outperforms prior state-of-the-art methods in robustness and reliability.
Our results highlight the importance of principled robustness-aware RL post-training as a key step toward improving the reliability and robustness of VLA models.
\end{abstract}

\section{Introduction}
Vision-Language-Action (VLA) models have emerged as a powerful paradigm for general-purpose robotic manipulation, with representative examples including RT-1~\citep{brohan2022rt}, RT-2~\citep{zitkovich2023rt}, OpenVLA~\citep{kim2024openvla}, and $\pi_0$~\citep{black2024pi_0}. 
These VLA models demonstrate the feasibility of learning universal robotic policies from large-scale multimodal datasets and have achieved remarkable performance across a variety of manipulation tasks. 
Despite these advances, recent studies~\citep{xing2025shortcut,shi2025diversity} reveal that the generalization ability of pre-trained VLA models remains limited when deployed in Out-of-Distribution (OOD) scenarios. 
The OOD scenarios in this work primarily refer to variations in the distribution of perception and execution levels. 
These variations are caused by unseen visual conditions (lighting, occlusion, camera changes, etc.) and execution disturbances (action noise).

One key reason is that VLA models may rely on spurious correlations between irrelevant observation features and actions, rather than capturing the true causal relationship required for robust generalization~\citep{xing2025shortcut}.

A natural solution for OOD tasks is Supervised Fine-Tuning (SFT) with task-specific data. 
However, SFT approaches based on imitation learning rely heavily on costly, high-quality human demonstrations. 
When only a limited number of demonstrations are available, their effectiveness drops sharply. 
In contrast, online Reinforcement Learning (RL) provides an appealing alternative by enabling autonomous data collection and policy refinement in deployment~\citep{lu2025vla,liu2025can,shu2025rftf,chen2025tgrpo,tan2025interactive}. 
However, while recent online RL-based post-training methods enable autonomous adaptation without requiring large amounts of expert demonstrations, they are not inherently designed to guarantee robustness against environmental perturbations. 
General RL fine-tuning focuses on maximizing task-specific reward signals in nominal environments, but does not explicitly regularize the sensitivity of the VLA model to environmental noise. 
Such optimized models are often over-adapted to the specific dynamics of the post-training environment and become brittle to even slight perturbations.
This limitation highlights the need for principled approaches that not only optimize for task performance, but also explicitly constrain the sensitivity of the model’s decision process to perturbations. 
Such robustness-oriented post-training is crucial for ensuring reliable deployment of VLA models in real-world environments.

Environmental perturbations can arise from multiple sources and affect different stages of the model-environment interaction~\citep{gu2025robust}. 
Currently, we focus on two fundamental types: observation perturbations and action perturbations. 
Observation perturbations reflect inaccuracies in perception, caused by sensor noise, latency, or imperfect state estimation, which lead to mismatches between the perceived and actual environment state. 
Action perturbations, on the other hand, arise from actuation errors, implementation inaccuracies, or hardware-level disturbances, which cause deviations between the intended and executed actions. 
These two perturbations, which are ubiquitous in real-world robotic systems, can be effectively modeled as random noise applied to the sequential decision-making process~\citep{duan2016benchmarking}. 
Furthermore, if not properly addressed, these perturbations can amplify over time and significantly degrade model performance.
Thus, a significant gap remains: 
\textit{ensuring the robustness of post-training to environmental perturbations, so that the resulting VLA model is stable and reliable in autonomous interactions.}

To this end, we propose \textbf{RobustVLA}, a concise and effective online RL post-training method designed to enhance the robustness of pre-trained VLA models under environmental perturbations. 
Our method is grounded in the robustness analysis of performance deviations induced by perturbations. 
We first establish explicit upper bounds on the robustness gap in the presence of observation and action perturbations, respectively.
We then extend this analysis to the case where both types of perturbations coexist. 
Our findings highlight that the robustness gap under observation perturbations is governed by the Jacobian sensitivity of the VLA model, whereas under action perturbations it is controlled by the smoothness of the VLA model update. 
This theoretical insight naturally motivates the introduction of Jacobian and smoothness regularization into the online RL fine-tuning process, which together constrain the worst-case robustness gap and promote robust decision-making. 
The main contributions of this work are summarized as follows:
\begin{itemize}
\item We introduce RobustVLA, a lightweight online RL post-training method that enhances the robustness of VLA models against environmental perturbations. 
\item We conduct three robustness analyses to performance deviations caused by observation and action perturbations, which induce explicit VLA model regularization optimization terms.
\item Extensive experiments demonstrate that RobustVLA exhibits superior resistance to environmental uncertainties and perturbations compared to state-of-the-art VLA baselines, resulting in more reliable and stable behavior.
\end{itemize}

\section{Related Work}
\paragraph{RL for VLA Models.} 
Recently, a growing body of research has explored RL as a mechanism to adapt VLA models to complex embodied tasks. 
The first research direction emphasizes offline RL training, where large-scale policy optimization can be performed without expensive online interactions. 
Previous work proposed the Q-Transformer~\citep{chebotar2023q}, introducing autoregressive offline Q-learning with a Transformer.
Subsequently, in the work by ReinboT~\citep{zhang2025reinbot}, researchers proposed an end-to-end VLA model that incorporates the RL principle of maximizing cumulative return.
Further work has proposed an offline RL post-training algorithm, ARFM~\citep{zhang2025balancingsignalvarianceadaptive}, for VLA flow models to boost the performance of VLA models on downstream tasks.
Moreover, some work has focused on online RL fine-tuning of VLA models. 
FLaRe~\citep{hu2024flare} employs large-scale domain randomized RL fine-tuning, and PA-RL~\citep{mark2024policy} unifies offline and online RL across policy categories. 
IRe-VLA~\citep{guo2025improving}, RIPT-VLA~\citep{tan2025interactive}, and VLA-RL~\citep{lu2025vla} are specifically designed to stabilize or extend online RL training of VLA models. 
Furthermore, optimization advances such as TGRPO~\citep{chen2025tgrpo} and RFTF~\citep{shu2025rftf} improve the efficiency of fine-tuning in trajectory or sparse reward settings. 
RLDG~\citep{xu2024rldg} refines task-specific RL into a general VLA, ConRFT~\citep{chen2025conrft} combines offline Q-learning with consistency-based online fine-tuning, and ReWiND~\citep{zhang2025rewind} introduces a language-guided reward model for fast task adaptation without the need for new demonstrations. 
Moreover, several works have focused on training value functions for visual-language manipulation tasks to align VLA models with downstream tasks, including Bellman-Guided Retrials~\citep{du2024err}, V-GPS~\citep{nakamoto2024steering}, and Hume~\citep{song2025hume}
However, previous work has often neglected the robustness and reliability of robotic visual-language manipulation under environmental perturbations and distribution changes, which is precisely the focus of our work.
More related works are in Appendix~\ref{app:more_works}.

\paragraph{Robust RL.}
Robust RL generally studies how to ensure stability and generalization under distribution shifts, noisy observations, and adversarial perturbations.
Classical approaches leverage robust MDP formulations~\citep{iyengar2005robust, wiesemann2013robust} and adversarial training~\citep{pinto2017robust}. 
Modern deep RL has introduced diverse techniques, including domain randomization for sim-to-real transfer~\citep{tobin2017domain, van2025hybrid} and methods for offline and curriculum learning~\citep{dennis2020emergent, wu2024acl}. 
Recently, the field has advanced towards theoretically-grounded distributionally robust optimization~\citep{li2024safe, cui2025dr, hesample} with provable finite-sample guarantees~\citep{ghosh2025provably, roch2025finite}. However, the online setting where an agent learns via direct interaction is less studied. 
A key challenge, unaddressed by previous work that often assumes access to exploratory policies~\citep{wang2021online,badrinath2021robust}, is to handle exploration and robustness simultaneously. Crucially, this focus on test-time robustness to environmental shifts distinguishes distributionally robust RL from corruption-robust RL, which handles corrupted training data~\citep{lykouris2021corruption, wei2022model, zhang2022corruption, ye2024towards, ye2023corruption}. 
Different from these, our work focuses on the post-training of VLA models for general task settings, aiming to simultaneously address model adaptability and stability issues within a unified model framework.

\section{Preliminaries}
\textbf{RL post-training of VLA model.}
We model the task as a Markov Decision Process (MDP)~\citep{sutton1998reinforcement}. 
The VLA model $\pi_\theta$ maps a natural language instruction $g$ and a sequence of observations $o_{1:t}$ and previous actions $a_{1: t-1}$ to a probability distribution over the current action $a_t$. 
Each episode is initialized with the context $\mathbf{c}=\left(g,o_1\right)$. 
The state $s_t$ is represented as $\left[g, o_{1: t}, a_{1: t-1}\right]$. 
At each time step $t$, $\pi_{\theta}$ samples an action from the model distribution: $a_t \sim \pi_\theta\left(\cdot \mid g, o_{1: t}, a_{1: t-1}\right)$. 
The environment $\mathcal{E}$ transitions to the next state $s_{t+1}=\left[g,o_{1: t+1}, a_{1: t}\right]$.
The environment $\mathcal{E}$ also returns a binary reward $r_t=1$ if the task goal is achieved, and $r_t=0$ otherwise.
The goal of $\pi_{\theta}$ is to maximize the undiscounted task reward:
$
J(\pi_{\theta})=\mathbb{E}_{\tau \sim \pi_{\theta}}[\sum_{t=1}^{H}r_t],
$
where $H$ is the rollout horizon.
To optimize this objective, policy gradient~\citep{sutton1999policy} is a simple and efficient method:
$
\nabla_\theta J(\pi_{\theta})=\mathbb{E}_{\mathbf{a} \sim \pi_\theta}\left[\nabla_\theta \log \pi_\theta(\mathbf{a} \mid \mathbf{c}) A(\mathbf{c}, \mathbf{a})\right],
$
where $A(\mathbf{c}, \mathbf{a})$ is the advantage function, which indicates how good the action $\mathbf{a}$ is compared to the baseline. 
However, the accurate estimation of $A(\mathbf{c}, \mathbf{a})$ is tricky, especially in the post-training phase of VLA models. 
To alleviate this issue, 
recent work~\citep{chen2025reinforcement} proposed a critic-free optimization framework. 
Specifically, for each sampling context $\mathbf{c}$, we can draw $K$ rollouts $\left\{\mathbf{a}_k \sim \pi_\psi(\cdot \mid \mathbf{c})\right\}_{k=1}^K$ under a sampling model $\pi_\psi$. Each rollout receives a reward $r_k=r\left(\mathbf{c}, \mathbf{a}_k\right)$. The leave-one-out baseline for rollout $k$ is calculated by averaging the other rewards~\citep{kool2018attention}:
$
A_k=r_k - \frac{1}{K-1} \sum_{j \neq k} r_j.
$
This allows us to bypass the cumbersome task of learning a value function and efficiently compute a stable advantage signal.
Furthermore, to utilize the collected rollouts $\left\{\left(\mathbf{c}_k, \mathbf{a}_k, A_k\right)\right\}$ to update $\pi_\theta$, we compute the importance ratio:  $\eta_k=\pi_\theta\left(\mathbf{a}_k \mid \mathbf{c}_k\right) / \pi_\psi\left(\mathbf{a}_k \mid \mathbf{c}_k\right)$, where $\pi_\theta$ is the current update model and $\pi_\psi$ is a sampling model. 
We then optimize $\pi_\theta$ utilizing the Proximal Policy Optimization (PPO) objective~\citep{schulman2017proximal}:
\begin{equation}
    \mathcal{L}_{\mathrm{PPO}}=-\min \left(\eta_i A_i, \operatorname{clip}\left(\eta_i, 1-\epsilon, 1+\epsilon\right) A_i\right),
\end{equation}
where $\epsilon$ is an update threshold. 
This objective encourages rollouts with positive advantages while preventing $\pi_\theta$ from deviating significantly from the sampling model $\pi_\psi$.

\textbf{Environmental Perturbations During RL Post-Training.}
We formalize the robust post-training problem of VLA model $\pi_{\theta}$ under environmental perturbations.  
At each time step $t$ in MDP, the $\pi_{\theta}$ receives a perturbed observation $\tilde{s}_t$ of the true state $s_t$, with deviation bounded by
$
\|\tilde{s}_t - s_t\| \leq \epsilon_s.
$
The action selected by the $\pi_{\theta}$ may be corrupted by execution noise, modeled as Gaussian perturbations:
$
a_t = \pi_t(\tilde{s}_t) + \xi_t$, 
where $\xi_t \sim \mathcal{N}(0, \sigma^2 I_d)
$. 
The environment evolves according to a Lipschitz continuous transition function $f$, ensuring that small deviations in states or actions yield bounded deviations in subsequent states. 
{The environmental dynamics $f(s,a)$ and the reward function $r(s,a)$ are Lipschitz continuous and are constants $L_f$ and $L_r$, respectively.
}
Finally, we denote by $\epsilon_{\text{offline}}$ the discrepancy between the initial VLA model $\pi_{init}$, obtained via imitation learning, and the expert model $\pi^*$. This discrepancy provides the starting error baseline for the online post-training process.

\begin{figure}[tbp]
\centering
\includegraphics[width=0.9\columnwidth]{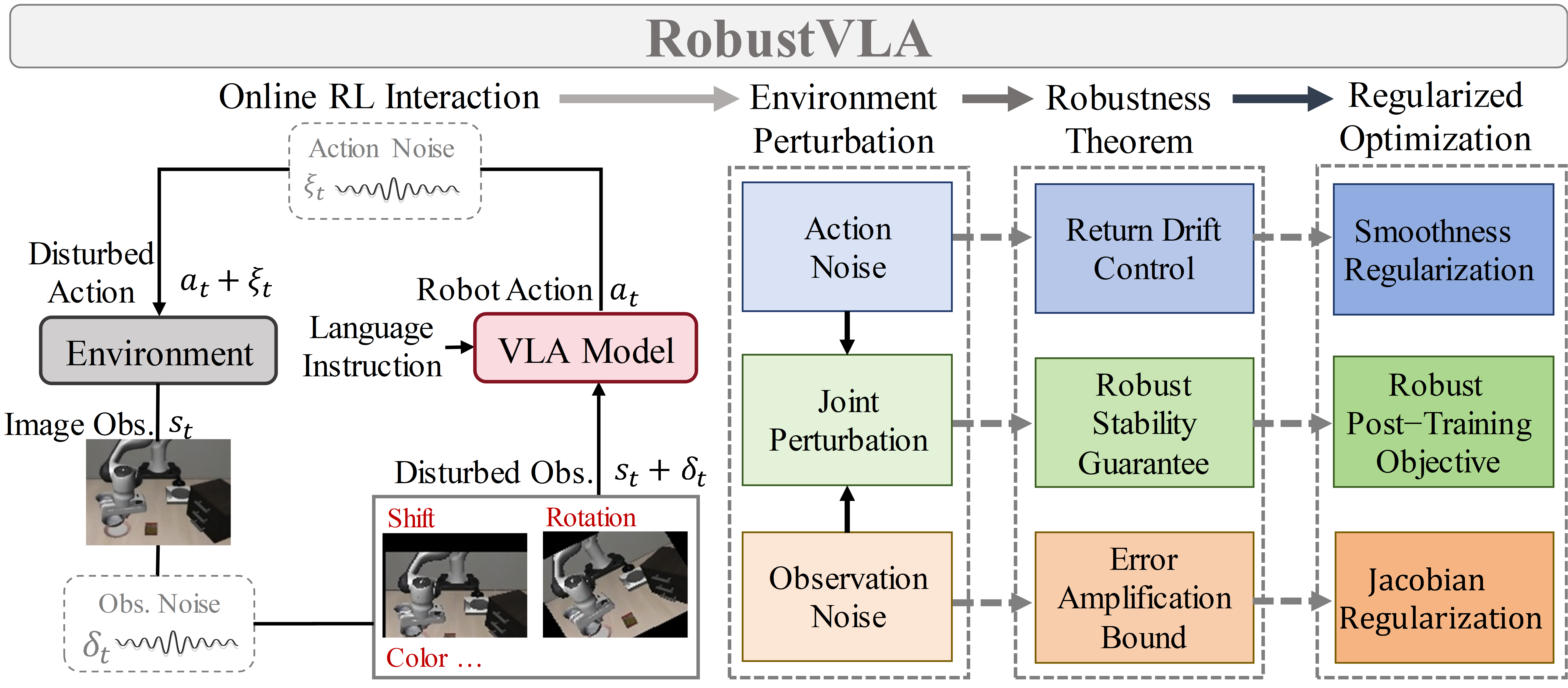}
\caption{The proposed RobustVLA method.
Due to the presence of environmental uncertainty during online RL interactions, 
we consider observation noise (sensor/camera corruptions) and action noise (Gaussian actuation errors), and their joint effect.
Moreover, we conduct robustness theoretical analysis based on these three aspects, establishing error amplification bounds, return drift control, and robust stability guarantees. 
Finally, we derive regularized optimization objectives, including the model Jacobian and action smoothing regularization, as well as the robust RL post-training objective.
}
  \label{fig:pipline}
\end{figure}
\section{Methodology}
In this work, we propose a novel robust online RL post-training method, RobustVLA, for VLA models under environmental perturbations (Fig.~\ref{fig:pipline}). 
We first analyze the robustness bounds of the VLA model under random environmental perturbations.
We then discuss how to derive post-training objectives based on these analyses and finally propose a concise online RL fine-tuning algorithm.

\subsection{Robustness analysis to environmental perturbations
}
\label{sec:theory}
{
In the perturbed online RL post-training of the VLA model, we observed two main sources of robustness degradation.
First, these models are highly sensitive to small changes in visual input: slight shifts in lighting, occlusions, or camera noise can cause disproportionately large changes in the model’s latent representation and therefore in its actions. 
This motivates explicitly constraining how sharply the model reacts to observation changes.  
Second, when the model is updated online, its behavior can change abruptly from one iteration to the next.  
Even small gradient steps may lead to large jumps in the produced actions, which become unstable when combined with the stochasticity of realistic execution.  
To stabilize this process, we also need to control how quickly the model is allowed to evolve during post-training.  
Therefore, in this section, we theoretically quantify how different types of perturbations affect the robustness of the VLA model.
}

\begin{theorem}[\textbf{Error Amplification Bound}]
\label{theorem_1:obs}
\textit{
Assume perturbed observations $\tilde{s}_t = s_t + \delta_s^t$ with $\|\delta_s^t\| \leq \epsilon_s$, and bounded Jacobian $\|\nabla_s \pi_t(s)\| \leq \lambda$. Then:
\[
{\mathbb{E}\big[J(\pi^*) - J(\pi)\big] \leq \mathcal{O}(H L_r L_f^H) \cdot \left( \epsilon_{\text{offline}} + \lambda \epsilon_s \right).
}
\]
}
\end{theorem}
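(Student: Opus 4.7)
The plan is to decompose the suboptimality gap $J(\pi^*) - J(\pi)$ into an offline (initialization) component and an online (observation-perturbation) component, then propagate per-step errors through the Lipschitz dynamics and the Lipschitz reward to obtain the horizon-dependent amplification factor $H L_r L_f^H$.

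First, I would introduce a hybrid reference trajectory. Let $\{s_t^*\}$ denote the trajectory induced by the expert $\pi^*$ on clean states, and $\{\hat s_t\}$ the trajectory induced by $\pi$ on the perturbed states $\tilde s_t = s_t + \delta_s^t$ with $\|\delta_s^t\|\le \epsilon_s$. I would write
\[
J(\pi^*)-J(\pi) \;=\; \underbrace{J(\pi^*)-J(\pi\circ s)}_{\text{(I) offline gap}} \;+\; \underbrace{J(\pi\circ s)-J(\pi\circ \tilde s)}_{\text{(II) observation-perturbation gap}},
\]
where $\pi\circ s$ (resp.\ $\pi\circ\tilde s$) denotes $\pi$ evaluated on clean (resp.\ noisy) observations. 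Term (I) is controlled by the imitation-learning baseline $\epsilon_{\text{offline}}$ through a standard performance-difference lemma; combined with the Lipschitz reward and a geometric-series argument on Lipschitz dynamics, this yields a contribution of order $H L_r L_f^H \cdot \epsilon_{\text{offline}}$.

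Second, for term (II) I would bound the per-step action deviation using the Jacobian assumption: $\|\pi_t(\tilde s_t)-\pi_t(s_t)\|\le \lambda \|\delta_s^t\|\le \lambda \epsilon_s$. Then I would set up a recursion for the state deviation $\Delta_t := \|\hat s_t - s_t\|$. Using Lipschitz continuity of $f$ in both arguments,
\[
\Delta_{t+1} \;\le\; L_f\big(\Delta_t + \|\pi_t(\tilde s_t)-\pi_t(s_t)\|\big) \;\le\; L_f \Delta_t + L_f\, \lambda \epsilon_s,
\]
which unrolls to $\Delta_t \le \lambda \epsilon_s \sum_{k=1}^{t} L_f^{k} = \mathcal{O}(L_f^t)\cdot \lambda \epsilon_s$. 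Applying $L_r$-Lipschitzness of the reward and summing the resulting pointwise bounds over $t=1,\dots,H$ produces a contribution of order $H L_r L_f^H\cdot \lambda \epsilon_s$. Adding the two contributions and taking expectation over any randomness in $\delta_s^t$ gives the claimed bound.

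The main obstacle will be the compounding step: because $f$ is only assumed Lipschitz (not contractive, $L_f$ may be $\ge 1$), the recursion for $\Delta_t$ blows up geometrically, which is exactly what produces the $L_f^H$ factor. I would need to justify that (a) the Jacobian assumption $\|\nabla_s \pi_t(s)\|\le \lambda$ can be upgraded to the finite-difference bound $\|\pi_t(\tilde s_t)-\pi_t(s_t)\|\le \lambda\epsilon_s$ via the mean value inequality on the (assumed convex) noise ball, and (b) the offline-gap contribution really enters the same $L_f^H$ amplification chain, which requires handling the simulation lemma carefully when the two policies differ at every time step rather than just once. A secondary subtlety is that $\pi_t(\cdot)$ is stochastic, so I would interpret the Jacobian bound as acting on the mean action and absorb the residual stochasticity into $\epsilon_{\text{offline}}$; under that convention the big-$\mathcal{O}$ constants are clean and the additive structure $\epsilon_{\text{offline}}+\lambda\epsilon_s$ emerges directly.
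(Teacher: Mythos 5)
Your proposal is correct and follows essentially the same route as the paper: bound the per-step action deviation by $\lambda\epsilon_s+\epsilon_{\text{offline}}$ via the triangle inequality and the Jacobian bound, unroll a Lipschitz state-deviation recursion into a geometric factor $\mathcal{O}(L_f^t)$, and sum $L_r$-Lipschitz reward differences over the horizon. The only difference is organizational --- you telescope through an intermediate clean-observation trajectory and treat the offline and perturbation contributions as two separate simulation-lemma arguments, whereas the paper folds both sources into a single driving term in one recursion (Lemma~\ref{app:lemma_State Deviation Recursion}); the subtleties you flag (the cross term from evaluating the policy at diverged states, and upgrading the Jacobian bound to a finite-difference bound) are glossed over at the same level of informality in the paper's own proof.
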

{Here the factor $L_f^{H}$ corresponds to the worst-case geometric amplification under $L_f$-Lipschitz dynamics; when $L_f<1$ the bound becomes strictly smaller.
}
Theorem~\ref{theorem_1:obs} indicates that the robustness gap under observation perturbations is governed by the product $\lambda \epsilon_s$, where $\lambda$ measures the VLA model’s local sensitivity and $\epsilon_s$ quantifies the observation noise level. 
Conversely, enforcing Jacobian regularization to shrink $\lambda$ directly reduces the error propagation term $\lambda \epsilon_s$, thereby bounding $J(\pi^*) - J(\pi)$ more tightly. 

\begin{theorem}[\textbf{Return Drift Control}]
\label{theorem_1:action}
\textit{
Assume actions are perturbed as $a_t = \pi_t(s_t) + \xi_t$ with $\xi_t \sim \mathcal{N}(0, \sigma^2 I_d)$, and VLA models satisfy ${\|\pi_i - \pi_{i-1}\|_\infty \leq \delta_i}$. Then:
\[
{\mathbb{E}\big[J(\pi^*) - J(\pi)\big]
\le \mathcal{O}(H L_r L_f^H)\cdot\Big(\epsilon_{\mathrm{offline}} + {\sum_{i=1}^N \delta_i} + \sigma\sqrt{d}\Big).
}
\]
}
\end{theorem}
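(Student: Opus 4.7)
The plan is to mirror the structure of Theorem~\ref{theorem_1:obs} but replace the observation-sensitivity term $\lambda\epsilon_s$ by two new ingredients: a telescoped policy-drift budget $\sum_i\delta_i$ and an expected action-noise term $\sigma\sqrt{d}$. The overall strategy is to (i) decompose the per-step discrepancy between the ideal expert action and the actually executed action into three conceptually disjoint sources, (ii) take expectations to tame the Gaussian noise, and (iii) propagate the resulting per-step error through the Lipschitz dynamics and Lipschitz reward across the horizon.

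First, I would write the deployed VLA model $\pi=\pi_N$ via the telescoping identity
\[
\pi_N(s) - \pi^{*}(s) \;=\; \bigl(\pi_{\text{init}}(s) - \pi^{*}(s)\bigr) \;+\; \sum_{i=1}^{N}\bigl(\pi_i(s) - \pi_{i-1}(s)\bigr),
\]
and apply the triangle inequality in sup norm together with the hypotheses $\|\pi_{\text{init}} - \pi^{*}\|_\infty \leq \epsilon_{\text{offline}}$ and $\|\pi_i - \pi_{i-1}\|_\infty \leq \delta_i$ to obtain $\|\pi_N - \pi^{*}\|_\infty \leq \epsilon_{\text{offline}} + \sum_{i=1}^{N}\delta_i$. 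Combining this with the action noise at step $t$ gives the executed-action discrepancy
\[
\bigl\|\pi^{*}(s_t) - a_t\bigr\| \;\leq\; \epsilon_{\text{offline}} + \sum_{i=1}^{N}\delta_i + \|\xi_t\|.
\]
Taking expectation and using the standard bound $\mathbb{E}\|\xi_t\|_2 \leq \sqrt{\mathbb{E}\|\xi_t\|_2^{2}} = \sigma\sqrt{d}$ (Jensen) turns the random noise term into the deterministic $\sigma\sqrt{d}$ appearing in the statement.

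Next, I would run the same trajectory-coupling argument used to establish Theorem~\ref{theorem_1:obs}: under the $L_f$-Lipschitz dynamics, a per-step action discrepancy of magnitude $\Delta$ at time $t$ induces a state discrepancy of at most $L_f\Delta$ at time $t+1$; iterating this recursion across $H$ steps yields the geometric amplification factor $L_f^{H}$. Composing with the $L_r$-Lipschitz reward and summing across the $H$ steps of the episode produces the $\mathcal{O}(H L_r L_f^{H})$ prefactor. Multiplying this prefactor by the expected per-step action error derived above gives exactly
\[
\mathbb{E}\bigl[J(\pi^{*}) - J(\pi)\bigr] \;\leq\; \mathcal{O}\bigl(H L_r L_f^{H}\bigr)\cdot\Bigl(\epsilon_{\text{offline}} + \sum_{i=1}^{N}\delta_i + \sigma\sqrt{d}\Bigr).
\]

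The main obstacle is the Gaussian term: because $\xi_t$ is unbounded, a pointwise worst-case argument is not available and I must commit to an expectation-based analysis from the start. The cleanest route is to pass to expectations at the step-wise error level, use $\mathbb{E}\|\xi_t\|\leq\sigma\sqrt{d}$, and invoke linearity of expectation together with Jensen's inequality when propagating the \emph{random} per-step error through the deterministic Lipschitz bookkeeping. A secondary subtlety is coupling the two rollouts (expert vs.\ deployed) so that the per-step policy-drift bound $\|\pi_N-\pi^{*}\|_\infty$ can be applied at the \emph{coupled} states; this is handled exactly as in Theorem~\ref{theorem_1:obs}, with the only new element being that the action-side noise replaces the observation-side sensitivity. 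All remaining steps reduce to the same telescoping, triangle-inequality, and Lipschitz-amplification bookkeeping already used in the observation-perturbation case.
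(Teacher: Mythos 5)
Your proposal is correct and follows essentially the same route as the paper: bound the per-step action discrepancy by $\epsilon_{\mathrm{offline}}+\sum_i\delta_i+\|\xi_t\|$ (the paper asserts this model-mismatch bound directly in its state-deviation lemma, where you make the telescoping explicit), control $\mathbb{E}\|\xi_t\|\le\sigma\sqrt{d}$ via Jensen applied to the chi-squared norm, and propagate through the $L_f$-Lipschitz dynamics recursion and $L_r$-Lipschitz reward to collect the $\mathcal{O}(H L_r L_f^H)$ prefactor. No substantive difference from the paper's argument.
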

{Here, $N$ represents the number of times the model parameters are updated.}
Theorem~\ref{theorem_1:action} reveals that the return gap under action perturbations has two additive drivers: 
\textbf{1)} the cumulative VLA model drift ${\sum_{i=1}^N \delta_i}$, which reflects the nonstationarity of online updates, and
\textbf{2)} the stochastic execution noise term $\sigma \sqrt{d}$, which is irreducible and scales with action dimension $d$.
The presence of ${\sum_{i=1}^N \delta_i}$ shows that if VLA model updates are too aggressive (large ${\delta_i}$), then even in the absence of large execution noise ($\sigma \approx 0$), the compounding drift will destabilize rollouts and inflate $J(\pi^*) - J(\pi)$. 
This means smoothness regularization $--$ enforcing ${\delta_i}$ to be small $--$ is essential to decouple online learning progress from robustness degradation, ensuring that long-horizon performance does not deteriorate due to accumulated VLA model shifts.

\begin{theorem}[\textbf{Robust Stability Guarantee}]
\label{theorem_1:obs and action}
\textit{
Under both observation and action perturbations, and both Jacobian and smooth regularizations, the return gap satisfies:
\[
{\mathbb{E}\big[J(\pi^*) - J(\pi)\big] \leq \mathcal{O}(H L_r L_f^H) \cdot \left( \epsilon_{\text{offline}} + \sum_{i=1}^N \delta_i + \lambda \epsilon_s + \sigma \sqrt{d} \right).
}
\]
}
\end{theorem}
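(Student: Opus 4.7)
}
The plan is to reuse the per-step error-propagation machinery already established for Theorems~\ref{theorem_1:obs} and~\ref{theorem_1:action}, but now apply it to a trajectory in which both perturbation sources act simultaneously. First I would fix a coupling between the reference trajectory generated by $\pi^{*}$ and the perturbed trajectory generated by the current VLA model $\pi$: at each step $t$ the executed action is $a_t = \pi_t(\tilde{s}_t) + \xi_t$ with $\|\tilde{s}_t - s_t\|\le \epsilon_s$ and $\xi_t\sim\mathcal N(0,\sigma^2 I_d)$, while the reference uses $a_t^{*}=\pi^{*}(s_t^{*})$. The performance gap $J(\pi^{*})-J(\pi)$ can then be written as a telescoping sum of per-step reward differences, so the task reduces to bounding $\mathbb{E}\|s_t - s_t^{*}\|$ and $\mathbb{E}\|a_t - a_t^{*}\|$ uniformly in $t\le H$ and then invoking $L_r$-Lipschitzness.

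Next I would set up a recursive inequality for $\Delta_t := \mathbb{E}\|s_t - s_t^{*}\|$. Using $L_f$-Lipschitzness of the dynamics,
\begin{equation*}
\Delta_{t+1} \;\le\; L_f\bigl(\Delta_t + \mathbb{E}\|a_t - a_t^{*}\|\bigr).
\end{equation*}
The action gap itself splits additively into four contributions that I would bound separately: (i) the offline imitation error $\epsilon_{\text{offline}}$ from $\pi_{\text{init}}$ vs.\ $\pi^{*}$; (ii) the online drift $\sum_{i=1}^{N}\delta_i$ coming from $\|\pi_i-\pi_{i-1}\|_\infty\le\delta_i$, handled exactly as in Theorem~\ref{theorem_1:action}; (iii) the observation-sensitivity term $\lambda\,\epsilon_s$, obtained from the Jacobian bound $\|\nabla_s\pi_t\|\le\lambda$ as in Theorem~\ref{theorem_1:obs}; and (iv) the execution-noise term $\mathbb{E}\|\xi_t\|\le\sigma\sqrt{d}$. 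Because each bound is additive in the corresponding perturbation, the four contributions combine without cross-terms up to constants absorbed in the $\mathcal O(\cdot)$. Unrolling the recursion gives $\Delta_t \le \mathcal O(L_f^{t})\bigl(\epsilon_{\text{offline}}+\sum_{i}\delta_i+\lambda\epsilon_s+\sigma\sqrt d\bigr)$.

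Finally, summing $L_r\,\mathbb{E}\|(s_t,a_t)-(s_t^{*},a_t^{*})\|$ over $t=1,\dots,H$ produces the prefactor $\mathcal O(H L_r L_f^{H})$ and yields the claimed bound. The step I expect to require the most care is item (iii): a naive application of the mean value inequality only controls the pointwise sensitivity at $s_t$, whereas the reference trajectory evaluates $\pi^{*}$ at $s_t^{*}$, so I would either bootstrap $\Delta_t$ inside the Jacobian term (which keeps the recursion linear in $\Delta_t$ and still telescopes) or, equivalently, invoke an integrated form of the Jacobian bound along the segment from $s_t$ to $\tilde s_t$ to absorb the offset into $\lambda\epsilon_s$. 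The action-noise term is the cleanest, since Gaussian $\xi_t$ contributes independently at each step and its expected norm is dimension-controlled by $\sigma\sqrt d$; the main subtlety there is ensuring that the Lipschitz propagation is applied in expectation so that higher moments of $\xi_t$ do not leak into the bound.
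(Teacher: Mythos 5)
Your proposal follows essentially the same route as the paper: the same per-step decomposition of $\|a_t-a_t^{*}\|$ into the Jacobian term $\lambda\epsilon_s$, the policy-mismatch term $\epsilon_{\text{offline}}+\sum_i\delta_i$, and the noise term bounded by $\mathbb{E}\|\xi_t\|\le\sigma\sqrt{d}$, fed into the linear recursion $d_{t+1}\le L_f(d_t+\|a_t-a_t^{*}\|)$, unrolled geometrically, and summed with $L_r$-Lipschitz rewards. Your closing caveat about $\pi^{*}$ being evaluated at $s_t^{*}$ rather than $s_t$ is a point the paper's lemma silently glosses over (it writes $a_t^{*}=\pi^{*}(s_t)$ in the decomposition but $f(s_t^{*},a_t^{*})$ in the dynamics step), and your proposed fix of bootstrapping the offset into the linear recursion is the standard way to close that gap.
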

Theorem~\ref{theorem_1:obs and action} shows that when both observation and 
action perturbations are present, the return gap depends jointly on the 
observation–sensitivity term $\lambda\epsilon_s$, and the accumulated update drift 
$\sum_{i=1}^N \delta_i$. 
Unlike the single-perturbation cases, here the deviation caused by noisy 
observations does not remain localized: it propagates through the dynamics, 
influences later states, and interacts with execution noise and model drift 
across the entire rollout.  
This nested feedback structure implies that robustness cannot be achieved by 
controlling only one source of error.  
Instead, Jacobian regularization (which reduces the instantaneous sensitivity 
$\lambda$) and smoothness regularization (which limits update-induced drift 
$\delta_i$) must be applied jointly to prevent compounding error growth over 
long horizons.  
This result highlights that robust VLA post-training fundamentally requires a 
dual regularization strategy; ignoring either perturbation channel will allow 
errors to accumulate through the dynamics and ultimately destabilize the VLA 
model.
More robustness analysis is in the Appendix~\ref{app:cor}.
 

\subsection{Model online regularization optimization objective}
The robustness analysis in Section~\ref{sec:theory} suggests two complementary levers: 
\textbf{1)} Jacobian regularization to suppress error amplification from observation noise, achieved by penalizing the sensitivity of the model with respect to its input states.  
\textbf{2)} Smoothness regularization to stabilize online updates against action perturbations, achieved by penalizing deviations between consecutive models.  
In our implementation, Jacobian regularization is realized by computing the gradient of the log-action probability with respect to the observation input:  
$\nabla_s \log \pi_\theta(a|s).$
Specifically, we compute the squared norm of this gradient across a batch of observations, average the result, and then apply a clamp operation to prevent excessively large penalties:
\begin{equation}
        \mathcal{R}_{\text{Jac}}(\theta) 
    = \mathbb{E}_{(s,a)\sim \mathcal{D}} \left[ 
        \min\left\{ \big\| \nabla_s \log \pi_\theta(a|s)\big\|_2^2, \, G_{max} \right\}
    \right],
\end{equation}
where $G_{max}$ is the gradient clamp parameters.
The theoretical justification for applying Jacobian regularization on the log-probability rather than the raw probability distribution comes from a norm comparison argument. 
Since the model $\pi_\theta(a|s)$ is a probability distribution, which induces the following relationship:  
$
    \big\| \nabla_s \pi_\theta(a|s) \big\|_F 
    \;\leq\; \big\| \nabla_s \log \pi_\theta(a|s) \big\|_F .
$
This is because  
$
\nabla_s \log \pi_\theta(a|s) 
= \nabla_s \pi_\theta(a|s)/\pi_\theta(a|s) ,
$
and since $0 < \pi_\theta(a|s) \leq 1$, dividing by $\pi_\theta(a|s)$ only enlarges the magnitude of the gradient. Consequently, any control on $\|\nabla_s \log \pi_\theta(a|s)\|_F$ automatically controls $\|\nabla_s \pi_\theta(a|s)\|_F$ as well, but in a stronger sense.
This directly penalizes large sensitivity coefficients, aligning with the bound in Theorem~\ref{theorem_1:obs}.  

\begin{algorithm}[tbp]
\caption{Robust online RL Post-Training for VLA Models}
\label{alg:robustvla}
\textbf{Input:} \parbox[t]{\hsize}{%
  Pretrained VLA model $\pi_\theta$, rollout buffer $\mathcal{D}_{\text{rollout}}$, Jacobian weight $\alpha$, smoothness weight $\beta$, \\
  gradient clamp $G_{\max}$, noise range $(\epsilon_{\min},\epsilon_{\max})$, success rate thresholds $(\tau_{\text{low}}, \tau_{\text{high}})$.}
\begin{algorithmic}[1] 
\STATE Initialize reference model $\pi_{\theta^-}\leftarrow\pi_\theta$, noise level $\epsilon \leftarrow \epsilon_{\min}$, success moving average $p_{MA} \leftarrow 0$
\FOR{$m$ $\leftarrow 1$ \TO $M$}
    \STATE Collect trajectories $\mathcal{D}_{\text{rollout}}$ with observation/action noise $\epsilon$
    \FOR{$n$ $\leftarrow 1$ \TO $N$}
        \STATE Sample minibatch $(s_i,a_i,A_i)\sim\mathcal{D}_{\text{rollout}}$
        \STATE Jacobian penalty:  
        $
            \mathcal{R}_{\text{Jac}} = \mathbb{E}_i\big[\min(\|\nabla_{s_i}\log \pi_\theta(a_i|s_i)\|_2^2, G_{\max})\big]
        $
        \STATE Action-smooth penalty: 
        $
            \mathcal{R}_{\text{Smooth}} = \mathbb{E}_i[\|\mu_\theta(s_i)-\mu_{\theta^-}(s_i)\|_2^2]
        $
        \STATE Robust objective: 
        $
            \mathcal{L}_{\text{RobustVLA}} = \mathcal{L}_{\mathrm{PPO}} + \alpha \mathcal{R}_{\text{Jac}} + \beta \mathcal{R}_{\text{Smooth}}
        $
        \STATE Update $\theta \leftarrow \theta - \eta \nabla_\theta \mathcal{L}_{\text{RobustVLA}}$
    \ENDFOR
    \STATE Update reference $\pi_{\theta^-}\leftarrow\pi_\theta$
    \IF{$m$ $\%$ $I_{\text{interval}}$ $=0$}
        \STATE Evaluate model $\pi_{\theta}$, obtain success rate $p$
        \STATE Update moving average $p_{MA} \leftarrow \gamma p + (1-\gamma) p_{MA}$
        \STATE \textbf{if} $p_{MA} > \tau_{\text{high}}$: $\epsilon \leftarrow \min(\epsilon + \Delta, \epsilon_{\max})$
        \STATE \textbf{else if} $p_{MA} < \tau_{\text{low}}$: $\epsilon \leftarrow \max(\epsilon - \Delta, \epsilon_{\min})$
    \ENDIF
\ENDFOR
\STATE \textbf{Return} finetuned robust VLA model $\pi_\theta$
\end{algorithmic}
\end{algorithm}

On the other hand, Theorem~\ref{theorem_1:action} states that the return gap can be bounded in terms of the worst-case deviation 
$
{\delta_i} = \sup_{s \in \mathcal{S}} \|\pi_\theta(\cdot|s)-\pi_{\theta^-}(\cdot|s)\|_\infty
$, which captures the maximum discrepancy between the new and old models across the entire state space. 
Although this characterization is theoretically precise, computing or constraining ${\delta_i}$ directly is infeasible in practice, 
especially in continuous action domains. 
To operationalize this idea, we introduce easy-to-handle alternatives.
First, the PPO objective already constrains the average distributional shift through a KL divergence penalty:
$
\mathbb{E}_{s \sim \mathcal{D}} \Big[ D_{\mathrm{KL}} \big( \pi_\theta(\cdot|s) \,\|\, \pi_{\theta^-}(\cdot|s) \big) \Big] \leq \epsilon.
$
By Pinsker’s inequality, this implies 
$
\|\pi_\theta(\cdot|s) - \pi_{\theta^-}(\cdot|s)\|_1 \leq \sqrt{2 D_{\mathrm{KL}}\big(\pi_\theta(\cdot|s)\|\pi_{\theta^-}(\cdot|s)\big)},
$ 
thus providing an average-case upper bound on model deviation. 
However, the KL penalty tends to be sensitive to variance changes and does not necessarily prevent large shifts in the mean action outputs.
To complement this, we further introduce a smoothness regularizer that explicitly penalizes deviations in the model means:
\begin{equation}
\mathcal{R}_{\text{Smooth}}(\theta) 
= \mathbb{E}_{s \sim \mathcal{D}} 
   \Big[ \| \mu_\theta(s) - \mu_{\theta^-}(s) \|_2^2 \Big],
\end{equation}
where $\mu_\theta(s)$ denotes the mean action. 
This regularizer directly bounds the displacement of the mean actions, which in turn controls a surrogate of ${\delta_i}$ in expectation. 
Intuitively, while the KL constraint ensures distribution-level stability, 
the $\mathcal{R}_{\text{Smooth}}$ enforces gradual updates in the action means, 
thereby preventing erratic model shifts that may otherwise amplify error accumulation. 
This dual mechanism $--$ average KL regularization together with mean-level $L_2$ smoothing $--$ serves as a practical surrogate for bounding ${\delta_i}$, 
thus aligning the algorithmic implementation with the theoretical insights of Theorem~\ref{theorem_1:action}.
Therefore, taking these regularization terms into consideration, we obtain a robust online RL optimization objective:
\begin{equation}
\label{equ:robustness_obj}
    \mathcal{L}_{\text{RobustVLA}}(\theta) 
    = \mathcal{L}_{\mathrm{PPO}}(\theta) 
    + \alpha \, \mathcal{R}_{\text{Jac}}(\theta) 
    + \beta \, \mathcal{R}_{\text{Smooth}}(\theta),
\end{equation}
where $\alpha$ and $\beta$ are hyperparameters controlling the strength of each robustness component.  
This formulation preserves the general PPO advantage-based policy improvement while explicitly constraining the two key robustness channels identified in our analysis: sensitivity to observation perturbations ($\alpha$ term) and stability under action noise and online updates ($\beta$ term). 
As a result, $\mathcal{L}_{\text{RobustVLA}}$ naturally operationalizes the theoretical bounds, guiding the learning process toward VLA models that are both performant and robust under environmental perturbations.

\subsection{Robust post-training algorithm implementation}
Based on the constructed robust RL optimization objective (Equ.~\ref{equ:robustness_obj}), we can design a lightweight VLA model post-training algorithm as shown in Algo.~\ref{alg:robustvla}.
The algorithm retains the rollout-collection and model-update structure of PPO, but crucially incorporates three key components beyond the standard objective. 
First, the Jacobian penalty $\mathcal{R}_{\text{Jac}}$ suppresses the sensitivity of the model to input perturbations by penalizing large gradients of $\log \pi_\theta(a|s)$ with respect to observations. 
Second, the action-smooth penalty $\mathcal{R}_{\text{Smooth}}$ constrains the change of action distributions between consecutive model updates, thereby ensuring temporal consistency. 
Finally, to further enhance robustness in practice, we adopt a curriculum-based adaptive noise scheduling mechanism: the level of injected observation and action noise is gradually increased or decreased based on the model's smoothed success rate. 
This strategy prevents premature destabilization during training, while gradually exposing the model to stronger perturbations as competence improves. 
Together, these design choices ensure that the model update not only maximizes advantages under the PPO surrogate loss, but also maintains robustness against both observation and action perturbations, thereby grounding the theoretical analysis into a practical and implementable training scheme.

\section{Experimental Evaluation}
In this section, we conduct extensive experiments in various scenarios to evaluate the effectiveness of the proposed RobustVLA. Specifically, we aim to examine the three key questions:
\textbf{1)} Does RobustVLA outperform state-of-the-art offline and online baseline methods in perturbed environment settings?
\textbf{2)} How does RobustVLA perform in online transfer learning under perturbed scenarios?
\textbf{3)} How do the core components of RobustVLA affect the overall performance?


\begin{wrapfigure}{r}{0.5\textwidth}  
    \centering
    \vspace{-10pt}
    \includegraphics[width=0.95\linewidth]{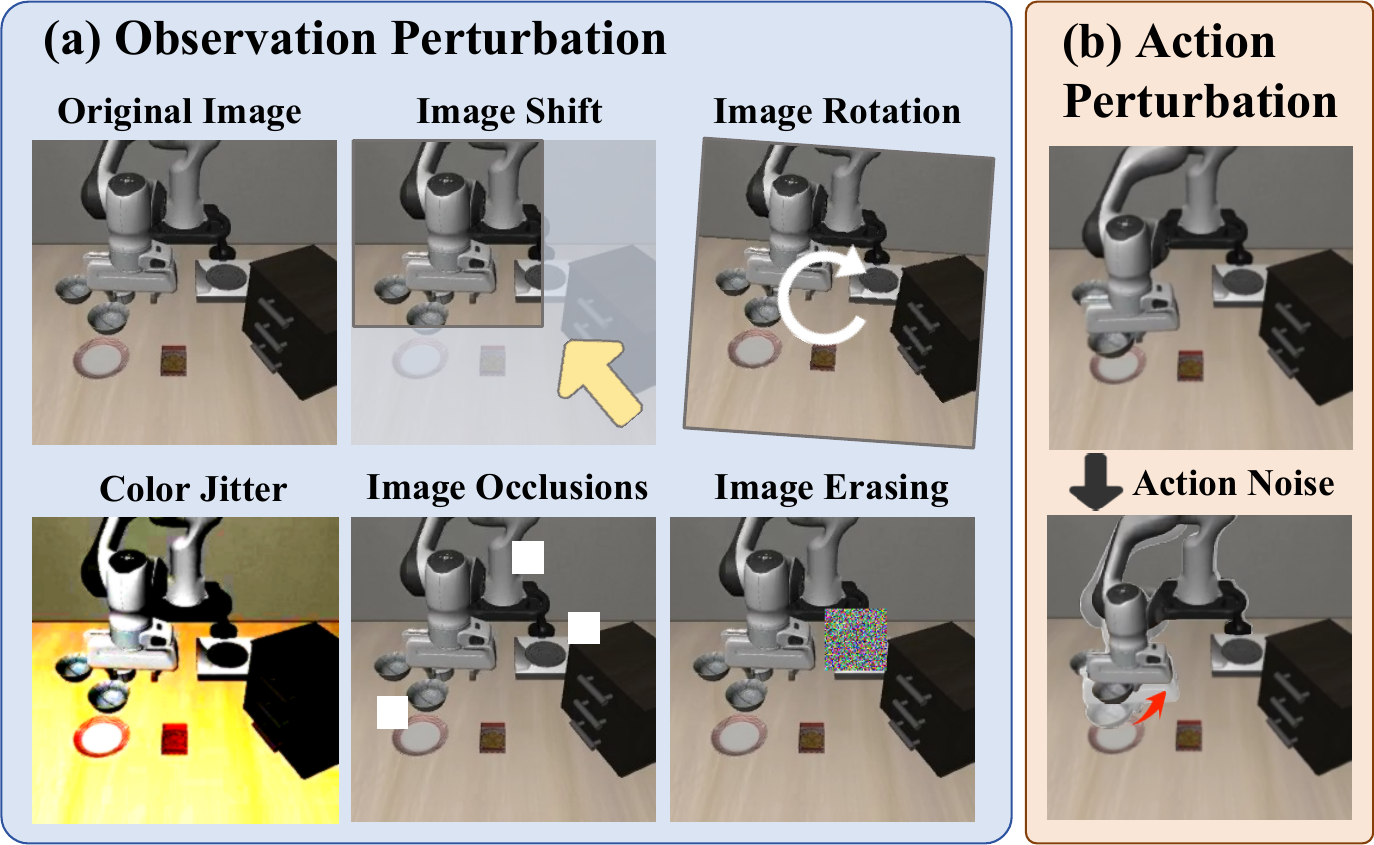}
    \caption{Robust VLA benchmarks based on the LIBERO  include two types: \textbf{a)} observation perturbation and \textbf{b)} action perturbation.}
    \label{fig:exp_setting}
    \vspace{-10pt} 
\end{wrapfigure}

\subsection{Performance Evaluation under Environmental Perturbations}
\paragraph{Experimental setup.} 
To comprehensively evaluate the performance of the RobustVLA, we construct a robust testing benchmark based on the LIBERO~\citep{liu2023libero} simulation platform that incorporates environmental uncertainty and perturbations, as shown in Fig.~\ref{fig:exp_setting}. 
The original LIBERO benchmark is a long-horizon robotic learning suite consisting of \textbf{Objects}, \textbf{Long}, \textbf{Spatial}, and \textbf{Goal}. 
We impose five observation perturbations and three action perturbations during the online interaction between the VLA model and the environment. 
The five observation perturbations are Image \textbf{Shift}, Image \textbf{Rotation}, \textbf{Color} Jitter, Image \textbf{Occlusions}, and Image \textbf{Erasing}. 
The three action perturbations are zero-mean Gaussian action noise with standard deviations of \textbf{0.1}, \textbf{0.2}, and \textbf{0.3}. 
Details of the observation perturbations are in Appendix~\ref{app:Observation perturbation setting}.
For model evaluation, we follow the common scheme of previous work~\citep{tan2025interactive}, where a single VLA model is deployed to all tasks in the suite, and the deployment is performed on 50 held-out test contexts for each task.

\paragraph{Baselines.} 
Our proposed algorithm comes in two versions: one without curriculum learning (\textbf{RobustVLA}) and one with curriculum learning (\textbf{RobustVLA-C}).
For a thorough comparison, three categories of state-of-the-art baseline models are considered. 
The first category is Offline Imitation Learning (\textbf{Offline IL}) models, which are trained solely on static expert datasets. 
These include: \textbf{1)} GEVRM~\citep{zhang2025gevrm}: This model uses prototype contrastive learning to resist observation perturbations. 
\textbf{2)} OpenVLA~\citep{kim2024openvla}, which represents robot actions as discrete token sequences and predicts them one by one in sequence. 
\textbf{3)} OpenVLA-OFT~\citep{kim2025fine}, which follows the OpenVLA modeling philosophy and further proposes improvements such as parallel decoding.
\textbf{4)} $\pi_{0}$~\citep{black2024pi_0}, a VLA flow model that effectively models multi-modal distributions.
The second category is Offline Reinforcement Learning (\textbf{Offline RL}) models, which apply RL algorithms to offline datasets to more fully capture the data quality distribution. 
These include: 
\textbf{5)} RWR~\citep{peters2007reinforcement}, which optimizes the model by performing reward-weighted regression on samples. 
\textbf{6)} ReinboT~\citep{zhang2025reinbot}, which guides VLA action generation by predicting densely maximized future returns.
\textbf{7)} ARFM~\citep{zhang2025balancingsignalvarianceadaptive}, a fine-tuning algorithm that adaptively adjusts the weights of a flow-matching RL loss.
The final category is Online Reinforcement Learning (\textbf{Online RL}) models, which fine-tune a pre-trained VLA model by autonomously interacting with the environment. 
This includes 
\textbf{8)} RIPT-VLA~\citep{tan2025interactive}, a critic-free fine-tuning algorithm that uses only sparse binary success rewards.
The implementation details of our method and the reproduction of the baseline algorithm are shown in Appendix~\ref{app:Implementation Details}.

\begin{table}[htbp]
\scriptsize
\centering
\caption{Average Success Rate (SR) ($\%$) under five observation perturbations. 
The best results are highlighted in bold, and the second-best results are underlined.
}
\label{tab:libero_performance_obs}
\resizebox{\linewidth}{!}{
\begin{tabular}{c|c|ccccc|c}
\Xhline{1.2pt}
\multirow{2}{*}{\textbf{Model Type}} & \multirow{2}{*}{\textbf{Models}} & \multicolumn{5}{c|}{\textbf{Observation Perturbation}} & \multirow{2}{*}{\textbf{Avg.}} \\ \cline{3-7}
& & Shift & Rotation & Color & Occlusions & Erasing & \\ 
\hline\hline
\multirow{4}{*}{Offline IL} 
 & $\pi_{0}$ & 31.2 & 49.5 & 78.5 & 88.3 & 85.2 & 66.5\\
  & GEVRM & \textbf{46.4} & 57.9 & 56.4 & 59.8 & 63.3  & 56.8\\
 & OpenVLA & 25.4 & 41.6 & 48.5 & 75.5 &  73.6 & 47.9\\ 
 & OpenVLA-OFT & 40.2 & 75.1 & \textbf{95.9} & 95.2 & 96.4 & 80.6\\ \hline
\multirow{3}{*}{Offline RL} 
 & RWR & 16.3 & 37.1 & 74.8 & 88.2 & 83.6 & 60.0\\
 & ARFM & 11.9 & 45.0 & 75.5 & 88.0 & 84.0 & 60.9\\
 & ReinboT & 19.8 & 37.1 & 74.4 & 89.1 & 84.9 & 61.0 \\ \hline
\multirow{3}{*}{Online RL} 
 & RIPT-VLA & 39.5 & 79.0 & 95.1 & 95.0 & 95.6  & 80.8\\
 & \textbf{RobustVLA}& \underline{42.1} & \underline{82.6} & \underline{95.6} & \textbf{95.7} & \textbf{96.9}  & \textbf{82.5}\\
  & \textbf{RobustVLA-C}& 41.1 & \textbf{82.8} & 92.3 & \underline{95.5} & \underline{96.5}  & \underline{82.2}\\
\Xhline{1.2pt}
\end{tabular}
}
\end{table}
\textbf{Observation Perturbation Settings.}
To evaluate the model's ability to cope with uncertainty in environmental observations, we perturb both the first- and third-view inferences during autonomous interaction.
The average success rates are shown in Tab.~\ref{tab:libero_performance_obs} and Appendix Tab.~\ref{tab_app:obs_noise}.
The results show that our proposed method has the best resistance to observation perturbations compared to all other baselines, with average SR of $82.5\%$ and $82.2\%$ for RobustVLA and RobustVLA-C. 
Among the Offline IL models, OpenVLA-OFT ($80.6\%$) shows a significant improvement over the original OpenVLA ($47.9\%$). 
Offline RL algorithms, however, perform less well, with similar success rates (all around $60\%$). 
This highlights the importance of the VLA model's autonomous interaction with the environment and the Jacobian regularization term in robustly resisting observation perturbations.

\setlength{\tabcolsep}{6pt}
\begin{table*}[t]
\centering
\begin{minipage}{0.49\textwidth}
\scriptsize
\centering
\caption{Average SR (\%) under action perturbations (noise level = 0.1, 0.2, 0.3).}
\label{tab:libero_performance_action_noise}
\begin{tabular}{c|ccc|c}
\Xhline{1.2pt}
\multirow{2}{*}{\textbf{Models}} & \multicolumn{3}{c|}{\textbf{Action Perturbation}} & \multirow{2}{*}{\textbf{Avg.}} \\ \cline{2-4}
 & 0.1 & 0.2 & 0.3 & \\ 
\hline\hline
$\pi_{0}$ & 77.5 & 42.5 & 18.4 & 46.1 \\ 
OpenVLA & 53.6 & 39.2 & 11.7 & 34.8\\
OpenVLA-OFT & 87.4 & 52.3 & 20.7 & 53.5\\ \hline
RWR & 80.6 & 44.8 & 20.8 & 48.7\\
ARFM & 81.1 & 48.2 & 21.1 & 50.1\\
ReinboT & 80.3 & 47.0 & 21.2 & 49.5 \\ \hline
RIPT-VLA & 84.7 & 48.1 & 18.8 & 50.5 \\
\textbf{RobustVLA} & \underline{88.5} & \underline{53.1} & \textbf{22.7} & \textbf{54.8} \\
\textbf{RobustVLA-C} & \textbf{88.9} & \textbf{53.3} & \underline{21.8} & \underline{54.7} \\
\Xhline{1.2pt}
\end{tabular}
\end{minipage}
\hfill
\begin{minipage}{0.49\textwidth}
\scriptsize
\centering
\caption{Average SR ($\%$) of the four LIBERO suites under joint perturbations.}
\label{tab:libero_performance_joint}
\begin{tabular}{c|cccc|c}
\Xhline{1.2pt}
\multirow{2}{*}{\textbf{Models}} & \multicolumn{4}{c|}{\textbf{Joint Perturbation}} & \multirow{2}{*}{\textbf{Avg.}} \\ \cline{2-5}
 & Spatial & Object & Goal & Long & \\ 
\hline\hline
$\pi_{0}$ & 59.0 & 33.2 & 56.2 & 35.0 & 45.9 \\
GEVRM & 78.6 & 42.0 & 67.8 & 33.4 & 55.5 \\
OpenVLA & 20.4 & 1.8 & 21.0 & 0.0 & 10.8 \\ 
OpenVLA-OFT & 84.2 & 71.2 & 81.6 & 40.8 & 69.5 \\ \hline
RWR & 60.4 & 39.6 & 53.6 & 38 & 47.9\\
ARFM & 58.0 & 36.6 & 54.6 & 37.2 & 46.6 \\
ReinboT & 59.0 & 40.4 & 56.6 & 38.8 & 48.7\\ \hline
RIPT-VLA & 86.0 & \underline{72.2} & \underline{82.2} & 72.2 & 78.2 \\
\textbf{RobustVLA} & \underline{88.2} & 68.2 & \textbf{83.0} & \underline{75.2}  & \underline{78.7} \\
\textbf{RobustVLA-C} & \textbf{89.6} & \textbf{82.0} & 79.2 & \textbf{77.6} & \textbf{82.1} \\
\Xhline{1.2pt}
\end{tabular}
\end{minipage}
\end{table*}
\textbf{Action Perturbation Setting.}
We add zero-mean Gaussian noise to the inferred actions during the interaction between the VLA model and the environment to test the model's resilience to action noise.
The average success rates for the four LIBERO suites are shown in Tab.~\ref{tab:libero_performance_action_noise} and Appendix Tab.~\ref{tab_app:action_noise}.
The results demonstrate that even under varying degrees of action noise, our method achieves the best performance compared to all baselines. 
The proposed RobustVLA and RobustVLA-C achieved similar performance, at $54.8\%$ and $54.7\%$, respectively. 
OpenVLA-OPT ($53.5\%$) performed best in the offline IL category and outperformed the best Offline RL method, ARFM ($50.1\%$). 
This demonstrates that our proposed method improves the VLA model's resilience to action perturbations and robust decision execution.

\textbf{Joint Perturbation Settings.}
To thoroughly evaluate the VLA model's robustness against environmental perturbations, we further set a more challenging joint perturbation: observation image rotation and an action noise level of 0.1.
The average success rates for the four LIBERO suites are shown in Tab.~\ref{tab:libero_performance_joint}.
The results show that the proposed RobustVLA-C using curriculum learning ($82.1\%$) achieves the best performance, significantly outperforming other methods. 
Furthermore, online RL algorithms generally outperform both offline IL and offline RL algorithms. 
Among offline IL algorithms, OpenVLA-OFT ($69.5\%$) performs best, while OpenVLA (only $10.8\%$) performs worst. 
Meanwhile, ReinboT ($48.7\%$) slightly outperforms other offline RL algorithms. 
These experimental results highlight several important conclusions: 
\textbf{1)} autonomous online RL interaction can enhance model robustness in the face of complex environmental perturbations; 
\textbf{2)} our proposed Jacobian penalty and action smoothness constraint can effectively guide the VLA model to cope with environmental uncertainty; and 
\textbf{3)} in challenging OOD scenarios, using curriculum learning allows the VLA model to more gradually adapt to downstream tasks and cope with environmental perturbations, thereby generating robust and stable decision-making behavior.

\subsection{Transfer Learning Performance Comparison}
In this section, we examine the transfer learning capabilities of RobustVLA to target domains containing environmental uncertainty. 
To this end, we transfer the pre-trained VLA model on the LIBERO Goal suite to downstream tasks with environmental perturbations (image rotation and $0.15$ action noise level).
The results in Fig.~\ref{fig:transfer} show that the performance of the baseline RIPT-VLA decreases with increasing number of environment rollouts, falling short of the proposed RobustVLA. 
This suggests that in perturbed scenarios, online RL fine-tuning, which simply maximizes the ultimate success reward, is insufficient to cope with environmental uncertainty and perturbations.
\begin{wrapfigure}{r}{0.55\textwidth}  
    \centering
    \vspace{-10pt}
    \includegraphics[width=1\linewidth]{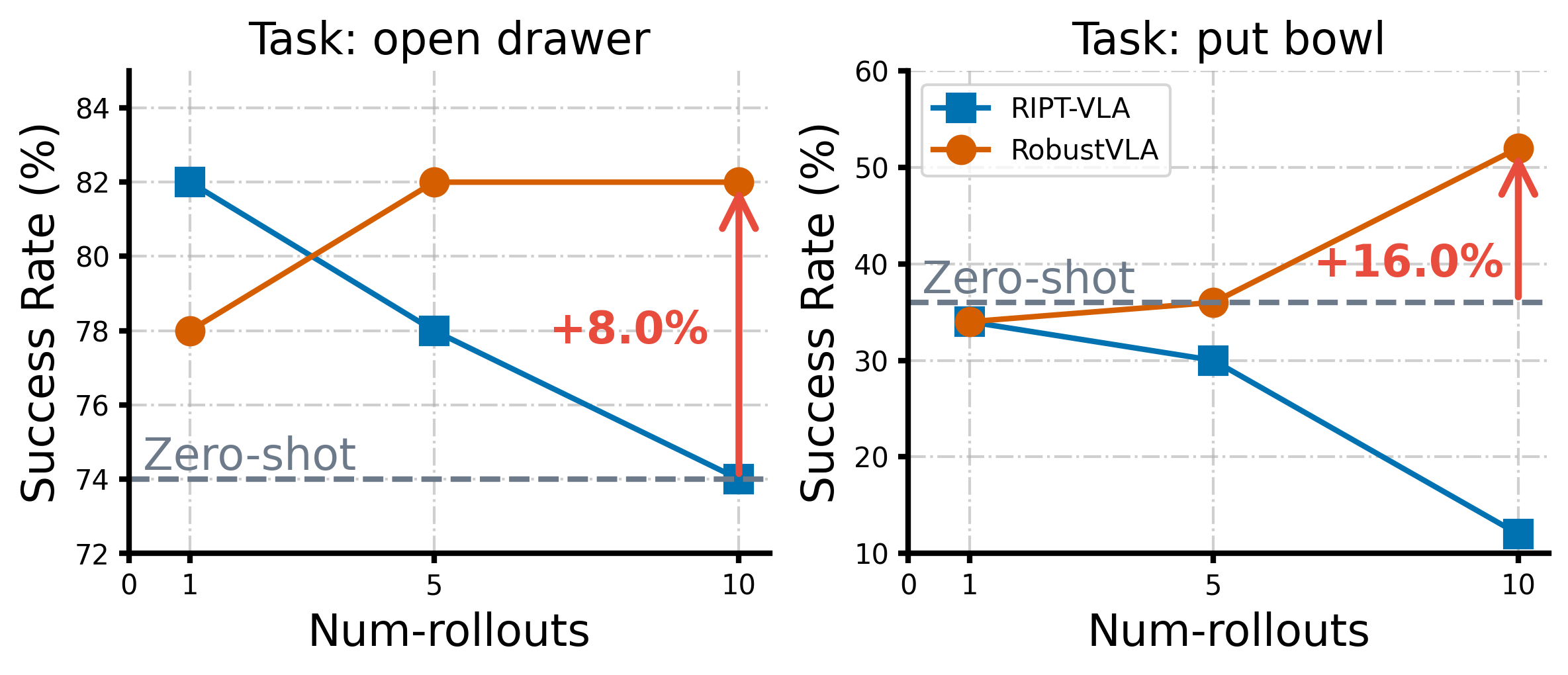}
    \caption{Comparison of transfer learning capabilities in OOD scenarios with uncertainty.}
    \label{fig:transfer}
    \vspace{-10pt} 
\end{wrapfigure}
In contrast, on the {\textit{open drawer}} and {\textit{put bowl}} tasks, RobustVLA achieves improvements of $8.0\%$ and $16.0\%$ over direct zero-shot transfer, respectively.
This demonstrates that the proposed RobustVLA can significantly withstand environmental perturbations and adopt more robust and stable behaviors after only a few environment rollouts, thus exhibiting superior transfer learning capabilities.
More results in Appendix Fig.~\ref{fig:more results}.

\begin{figure}[htbp]
\centering
\includegraphics[width=0.95\columnwidth]{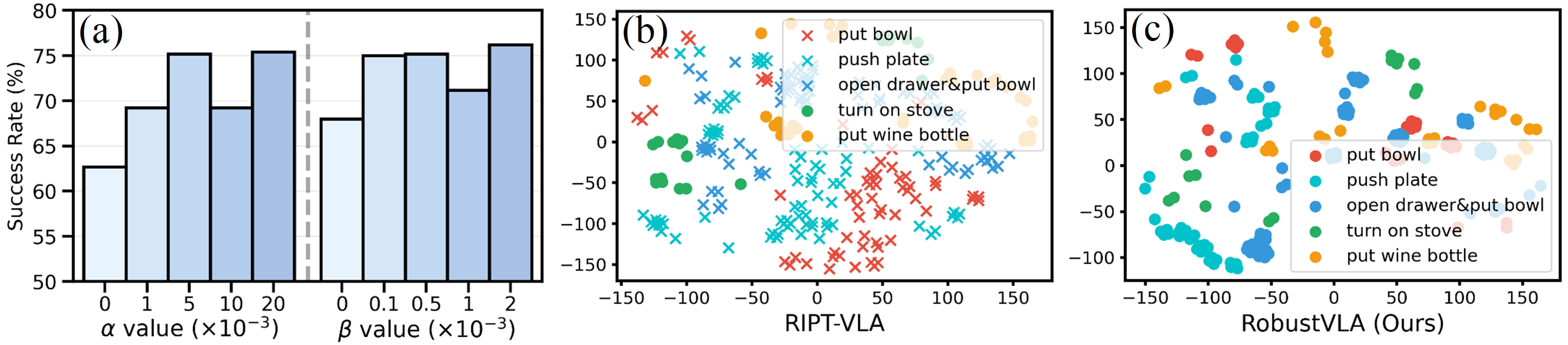}
\caption{(a) Ablation studies on Jacobian weight $\alpha$, and action-smooth weight $\beta$.
(b-c) T-SNE visualization of the observation representations of the baseline RIPT-VLA and the proposed RobustVLA. 
“$\bullet$”: task success; “$\times$”: task failure.
}
  \label{fig:ablation}
\end{figure}
\subsection{Ablation Analysis}
\textbf{Hyperparameter Ablation.} In the RobustVLA optimization objective, the hyperparameters $\alpha$ and $\beta$ control the strength of the robustness component. 
We perform ablation analysis of these two parameters in a joint perturbation setting: LIBERO goal suite, observation image rotation, and an action noise level of $0.1$. 
Results in Fig.~\ref{fig:ablation}(a) show that the omission of either penalty term leads to a drop in VLA model performance, while our algorithm exhibits good hyperparameter insensitivity.

\textbf{Observation Representation Visualization.} To investigate how the proposed penalty objective affects the observation representation of the VLA model, we performed a T-SNE~\citep{maaten2008visualizing} visualization analysis in the same joint perturbation setting, as shown in Fig.~\ref{fig:ablation}(b-c). 
The result shows that observation representations corresponding to the RobustVLA remain largely unchanged despite varying degrees of environmental perturbation. 
This demonstrates its robustness to environmental uncertainty, enabling robust task performance. 
In contrast, the baseline RIPT-VLA exhibits overly separated clusters in the observation representations of some downstream tasks. 
This indicates a deviation from the desired trajectory for task performance, leading to a lack of robustness and reliability in decision-making.

{
\begin{figure}[htbp]
\centering
\includegraphics[width=0.95\columnwidth]{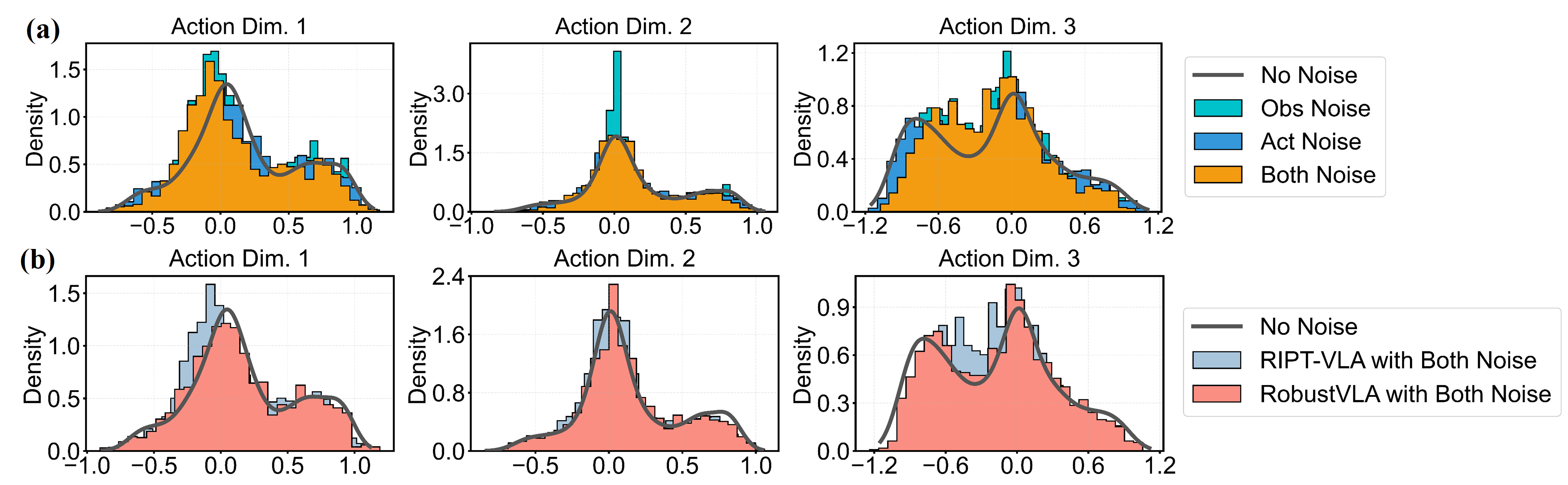}
\caption{
{
(a) Action distribution during model inference under four noise conditions. 
(b) Comparison of action distribution between baseline RIPT-VLA and the proposed RobustVLA.
}
}
\label{fig:rebuttal_ablation_main}
\end{figure}
\textbf{Action Distribution Visualization.}
We visualized the action distribution to explore how various noises affect model execution and to compare the proposed RobustVLA with the baseline RIPT-VLA. 
Specifically, we recorded $100$ action trajectories during RIPT-VLA inference in LIBERO Spatial. 
Four settings were utilized: \textbf{1)} no observation noise or action noise; \textbf{2)} only image rotation perturbation; \textbf{3)} only action noise (noise level $0.1$); and \textbf{4)} both image rotation perturbation and action noise. 
The visualization results are shown in Fig.~\ref{fig:rebuttal_ablation_main}(a) and Appendix Fig.~\ref{fig:rebuttal_ablation_app_1}. 
The results show that the action trajectories exhibit diverse distribution patterns and coverage under different noise conditions. 
This indicates that observation perturbation and action perturbation have different emphases and degrees of influence on robot manipulation behavior, especially when both perturbations are present. 
Furthermore, the action distribution comparison between the baseline RIPT-VLA and the proposed RobustVLA is shown in Fig.~\ref{fig:rebuttal_ablation_main}(b) and Appendix Fig.~\ref{fig:rebuttal_ablation_app_2}. 
The results show that under joint perturbation, the action distribution of RobustVLA is closer to the action distribution under noise-free conditions compared to RIPT-VLA. 
This demonstrates that RobustVLA can adaptively adjust its behavior, enabling the robot to better resist environmental disturbances and thus produce more robust and stable decision-making actions.
}

\section{Conclusion}
In this work, we investigate how VLA models can effectively cope with environmental uncertainty.
While large-scale pre-training endows VLA models with flexible manipulation skills, environmental noise can render typical online post-training unreliable when applied to downstream tasks.
We therefore propose RobustVLA, a robustness-aware online RL post-training method for VLA models.
We establish robustness bounds and induce a concise robust optimization objective. 
Extensive experiments confirm that our method significantly improves the stability and robustness of VLA models.
These findings highlight that robustness-aware post-training is a key step towards reliable model deployment.
A promising future work is to investigate the robustness of VLA models to other types of perturbations during autonomous interaction, such as dynamic shift and external disturbances.



\bibliography{iclr2026_conference}
\bibliographystyle{iclr2026_conference}

\newpage
\appendix
\counterwithin{theorem}{section}  
\setcounter{theorem}{0}           
\counterwithin{corollary}{section}  
\setcounter{corollary}{0}           

\section{Appendix}
\subsection{More related Work}
\label{app:more_works}
\paragraph{Vision-Language-Action Models.} 
Recent advances in foundation models have extended large-scale pretraining beyond language and vision to embodied agents, giving rise to Vision-Language-Action (VLA) models. The initial wave of this paradigm was marked by two key approaches: extending robotics-native architectures to internet scale, exemplified by RT-1~\citep{brohan2022rt} and RT-2~\citep{ zitkovich2023rt}, and injecting multimodal sensor data directly into large language models, with PaLM-E~\citep{driess2023palm} representing a landmark embodied language model. Subsequent efforts explored improved architectures and scaling laws, with models such as Octo~\citep{team2024octo} and OpenVLA~\citep{kim2024openvla} establishing powerful, open-source benchmarks. Architectural exploration also led to new policy representations, exemplified by the use of diffusion transformers in Dita~\citep{hou2025dita} for scaling generalist policies. More recent work has intensified the focus on grounding VLAs in high-quality, robotics-specific data, including the aggregation of diverse datasets via projects like RT-X~\citep{o2024open}, GR-3~\citep{cheang2025gr}, and pioneering data collection methods such as Mobile ALOHA~\citep{fu2024mobile}. 
Meanwhile, to advance real-world generalization, models including H-RDT~\citep{bi2025h},  $\pi_{0}$~\citep{black2024pi_0} and $\pi_{0.5}$~\citep{intelligence2025pi_} have been developed to improve embodied reasoning and address open-world, knowledge-driven tasks. 
Collectively, these studies position VLA models as a core foundation for general-purpose embodied intelligence.
Moreover, in the field of robust vision-language manipulation, previous work, GEVRM~\citep{zhang2025gevrm}, evaluates environmental perturbations by simulating responses and optimizing them through prototype contrastive learning. 
However, GEVRM is inherently a hierarchical imitation learning framework, and the distribution shift of its visual planning objectives during training and testing can lead to performance bottlenecks.
Other work~\citep{hancock2025run} has proposed a runtime intervention scheme that identifies model-sensitive input regions and modifies task-irrelevant regions to reduce model sensitivity, thereby improving visual robustness. 
However, this approach relies heavily on off-the-shelf VLMs and segmentation models, and its primary limitation is accurately distinguishing objects from background clutter.
In contrast to these works, our work focuses on online RL post-training to further adapt and coordinate the VLA model with downstream tasks.

\subsection{Further analysis of model robustness}
\label{app:cor}
Building upon the upper bounds established in the theorems, we next examine two corollaries that sharpen the theoretical insights under additional structural assumptions. These corollaries illustrate how the abstract bounds specialize in contractive dynamics or strongly regularized update regimes, thereby revealing concrete conditions under which robustness can be guaranteed at scale.  

\begin{cor}
\label{cor1}
Under the assumptions of Theorem~\ref{theorem_1:obs} but with $\epsilon_{\mathrm{offline}}=0$ (the imitation model is perfect) and contractive dynamics $L_f<1$, the expected return gap satisfies
\[
{
\mathbb{E}\big[J(\pi^*) - J(\pi)\big]
\;\le\;
H \cdot L_r \cdot \frac{1}{1-L_f}\cdot \lambda \epsilon_s.
}
\]
\end{cor}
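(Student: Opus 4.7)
The plan is to revisit the proof of Theorem~\ref{theorem_1:obs} and replace the worst-case geometric amplification factor $L_f^{H}$ with the convergent geometric-series bound $1/(1-L_f)$ that becomes available in the contractive regime $L_f<1$. First I would invoke the standard performance-difference decomposition
\[
\mathbb{E}\big[J(\pi^*)-J(\pi)\big] \;=\; \sum_{t=1}^{H}\mathbb{E}\big[r(s_t^*,a_t^*)-r(s_t,a_t)\big],
\]
and bound each per-step term via the $L_r$-Lipschitz reward by a multiple of the state gap $\|s_t^*-s_t\|$, folding the action-level discrepancy into the same recursion so that it is not counted twice.

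Next, I would unroll the state deviation using the $L_f$-Lipschitz dynamics together with the Jacobian bound $\|\nabla_s\pi_t\|\le\lambda$, which produces the recursion
\[
\|s_{t+1}^*-s_{t+1}\| \;\le\; L_f\big(\|s_t^*-s_t\|+\|a_t^*-a_t\|\big) \;\le\; L_f\big(\|s_t^*-s_t\|+\lambda\epsilon_s\big).
\]
Under the hypothesis $\epsilon_{\mathrm{offline}}=0$ the imitation model is perfect, so the recursion starts from $\|s_0^*-s_0\|=0$ and integrates to $\|s_t^*-s_t\|\le \lambda\epsilon_s\sum_{k=1}^{t}L_f^{k}$.

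The decisive step is then purely analytic: because $L_f<1$, the partial geometric sum satisfies $\sum_{k=1}^{t}L_f^{k}\le \sum_{k=0}^{\infty}L_f^{k}=1/(1-L_f)$ uniformly in $t$. Substituting this uniform bound into the per-step reward gap and summing over $t=1,\dots,H$ yields exactly $H\cdot L_r\cdot \tfrac{1}{1-L_f}\cdot \lambda\epsilon_s$, matching the corollary.

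The main obstacle is bookkeeping rather than new ideas: one must carefully arrange the per-step decomposition so that the instantaneous action-gap contribution $\lambda\epsilon_s$ is absorbed into the recursion exactly once, and so that the constants carried from Theorem~\ref{theorem_1:obs} do not inadvertently reintroduce an $L_f^{H}$ factor. Once this is handled, the replacement $L_f^{H}\mapsto 1/(1-L_f)$ is a one-line application of the geometric-series formula, and no probabilistic or structural machinery beyond that already used in Theorem~\ref{theorem_1:obs} is required.
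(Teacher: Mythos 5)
Your proposal is correct and follows essentially the same route as the paper: the same state-deviation recursion $d_{t+1}\le L_f(d_t+\lambda\epsilon_s)$ unrolled from $d_0=0$, the same uniform geometric-series bound $\sum_{k}L_f^{k}\le 1/(1-L_f)$ replacing the $L_f^{H}$ amplification, and the same per-step $L_r$-Lipschitz reward bound summed over $H$ steps. Your "folding" of the instantaneous action gap into the geometric sum (the $k=0$ term) is exactly the paper's $L_r(d_t+\lambda\epsilon_s)\le L_r\lambda\epsilon_s\big(\tfrac{L_f}{1-L_f}+1\big)=\tfrac{L_r\lambda\epsilon_s}{1-L_f}$ step written in a slightly different order.
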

Corollary~\ref{cor1} shows that the compounding factor $\mathcal O(L_f^H)$ collapses to a linear dependence on the rollout horizon $H$, yielding a return gap on the order of $H \cdot \lambda \epsilon_s$. 
What this implies is that observation noise no longer causes exponential amplification. 
This highlights a regime where Jacobian regularization has direct leverage $--$ by shrinking $\lambda$, the sensitivity to $\epsilon_s$ is proportionally reduced, ensuring that perception errors do not destabilize the system even in long rollouts. 
In practice, this means that in sufficiently stable environments, robustness to noisy observations is achievable with only modest regularization strength.  

{
\begin{cor}
\label{cor2}
Under the setting of Theorem~\ref{theorem_1:obs and action} but with discounted return $J(\pi)=\mathbb{E}\Big[\sum_{t\ge 1}\gamma^{t-1} r(s_t,a_t)\Big]$,
for some discount factor $\gamma\in(0,1)$ with contractive dynamics $L_f \in (0,1)$.
Assume the per-step driving term
$\zeta_t := \lambda\epsilon_s + \epsilon_{\mathrm{offline}} + {\sum_{i=1}^N \delta_i} + \|\xi_t\|$
is uniformly bounded in expectation, i.e. there exists $C<\infty$ such that $\sup_t \mathbb{E}[\zeta_t]\le C$, and that $\sup_t\mathbb{E}[\epsilon_t]\le E<\infty$.  
Then the discounted expected return gap is bounded independently of the rollout horizon $H$:
\[
\mathbb{E}\big[J(\pi^*) - J(\pi)\big] = \mathcal{O}(1).
\]
\end{cor}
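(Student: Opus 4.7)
The plan is to combine two separate contraction mechanisms: the physical contraction in the dynamics ($L_f<1$), which prevents trajectory-level error amplification across time, and the discounting ($\gamma<1$), which prevents reward-level amplification across time. Either one alone would tame the $L_f^{H}$ blow-up of Theorem~\ref{theorem_1:obs and action}; together they yield a bound independent of the horizon $H$.

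First I would revisit the one-step error recursion that implicitly underlies Theorem~\ref{theorem_1:obs and action}. Writing $\epsilon_t:=\|\tilde s_t-s_t\|$, the Lipschitz dynamics together with the perturbation model gives a recursion of the form
\[
\epsilon_{t+1}\;\le\; L_f\,\epsilon_t+\zeta_t,
\qquad
\zeta_t=\lambda\epsilon_s+\epsilon_{\mathrm{offline}}+\sum_{i=1}^N\delta_i+\|\xi_t\|,
\]
with $\epsilon_1=0$. Unrolling and taking expectations, the contraction $L_f<1$ replaces the geometric blow-up $L_f^{t-1}+L_f^{t-2}+\cdots$ by the convergent series $\sum_{k\ge 0}L_f^{k}=\tfrac{1}{1-L_f}$. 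Using the hypothesis $\sup_t\mathbb{E}[\zeta_t]\le C$, I would conclude the uniform per-step bound
\[
\sup_{t\ge 1}\mathbb{E}[\epsilon_t]\;\le\;\frac{C}{1-L_f}.
\]

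Second, I would translate this state-level bound into a reward-level bound via the $L_r$-Lipschitz reward, so that $\mathbb{E}\bigl[|r(s_t^*,a_t^*)-r(\tilde s_t,\tilde a_t)|\bigr]\le L_r\,\mathbb{E}[\epsilon_t]\le L_r C/(1-L_f)$ for all $t$. Then, because the return is now discounted, I can bound the gap termwise and sum a geometric series in $\gamma$ rather than a polynomial-in-$H$ sum:
\[
\mathbb{E}\bigl[J(\pi^*)-J(\pi)\bigr]
\;\le\;\sum_{t\ge 1}\gamma^{t-1}\,L_r\,\mathbb{E}[\epsilon_t]
\;\le\;\frac{L_r}{1-\gamma}\cdot\frac{C}{1-L_f},
\]
which is a finite constant and in particular $\mathcal{O}(1)$ in $H$.

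The main obstacle I anticipate is bookkeeping rather than a genuine analytic difficulty: I need to make sure the per-step driving term $\zeta_t$ used in the recursion really matches the one in the hypothesis, in particular that the cumulative update drift $\sum_{i=1}^N\delta_i$ is absorbed into the constant $C$ (treated as a fixed training-time budget rather than a growing-in-$t$ quantity), and that the stochastic component $\|\xi_t\|$ is integrable so that $\sup_t\mathbb{E}\|\xi_t\|\le\sigma\sqrt{d}$ can be folded into $C$. A secondary subtlety is the initial-error term $\epsilon_{\mathrm{offline}}$: because the contraction kills its transient contribution at geometric rate $L_f^{t-1}$, it only adds an $\mathcal{O}(\epsilon_{\mathrm{offline}}/(1-\gamma L_f))$ constant to the final bound, which does not affect the $\mathcal{O}(1)$ conclusion. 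Once these accounting items are pinned down, the rest is just summing two convergent geometric series.
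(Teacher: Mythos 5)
Your argument is correct and follows essentially the same route as the paper: unroll the contractive state-deviation recursion to obtain a uniform-in-$t$ bound $\mathbb{E}[d_t]\le C L_f/(1-L_f)$, pass to rewards via $L_r$-Lipschitzness, and sum the discounted geometric series to get a horizon-independent constant. Two minor bookkeeping corrections that do not affect the $\mathcal{O}(1)$ conclusion: the quantity satisfying your recursion is the state deviation $d_t=\|s_t-s_t^*\|$ from Lemma~\ref{app:lemma_StateDeviationNested}, not $\|\tilde s_t-s_t\|$ (which is the observation noise, already bounded by $\epsilon_s$), and the per-step reward gap should be $L_r\big(\mathbb{E}[d_t]+\mathbb{E}[\zeta_t]\big)$ since the reward depends on the perturbed action as well as the state, which merely adds another convergent $L_r C/(1-\gamma)$ term as in the paper's proof.
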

}
Corollary~\ref{cor2} establishes that in the discounted setting with contractive 
dynamics ($L_f<1$), the expected return gap remains bounded by a constant that 
does not depend on the rollout horizon $H$.  
Intuitively, discounting attenuates the influence of long-term deviations, while 
contractive dynamics prevent the state deviation $d_t$ from growing 
unboundedly even when both observation and action perturbations are present.  
A key observation is that the nonstationarity term 
$\sum_{i=1}^{N}\delta_i$ enters the bound only through the 
uniform driving term $\zeta_t$; as long as these accumulated model updates 
remain bounded, the discounted error cannot compound over time.  
Consequently, the effect of both perturbations and update-induced drift is 
geometrically damped by the factor $\gamma L_f < 1$, guaranteeing that the 
overall return difference remains $\mathcal{O}(1)$ even as $H\to\infty$.  
Practically, this indicates that stable long-horizon adaptation is achievable 
provided that policy updates are sufficiently smooth and that the system 
dynamics do not amplify perturbations.

Together, these two corollaries demonstrate how regularization directly modulates the scaling of the return gap: Jacobian regularization turns exponential amplification into linear growth under stable dynamics, while smoothness regularization caps compounding drift at a constant level under controlled updates. This dual perspective makes clear that robustness in VLA fine-tuning is not an abstract desideratum but a quantitatively attainable property under well-chosen regularization regimes.

\subsection{Proof of robustness theories}
\textbf{Proof of Theorem~\ref{theorem_1:obs} (Error Amplification Bound)}
\begin{lemma}[State Deviation Recursion]
\label{app:lemma_State Deviation Recursion}
Let $d_t := \|s_t - s_t^*\|$ denote the state deviation at time $t$ between $\pi$ and $\pi^*$. Then under Lipschitz dynamics $f$, we have:
\[
d_{t+1} \leq L_f d_t + L_f (\lambda \epsilon_s + \epsilon_{\text{offline}}).
\]
\end{lemma}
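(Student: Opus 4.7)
The plan is to unfold $d_{t+1}=\|s_{t+1}-s_{t+1}^{*}\|$ using the Lipschitz dynamics and then reduce the result to a per-step policy-evaluation gap at time~$t$. First I would write $s_{t+1}=f(s_t,\pi(\tilde s_t))$ and $s_{t+1}^{*}=f(s_t^{*},\pi^{*}(s_t^{*}))$, insert the auxiliary point $f(s_t^{*},\pi(\tilde s_t))$, and apply the $L_f$-Lipschitz property of $f$ separately in its two arguments. This cleanly splits the bound into a state-propagation piece of size $L_f d_t$ and an action-gap piece of size $L_f\,\|\pi(\tilde s_t)-\pi^{*}(s_t^{*})\|$.

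The core of the argument is then to show $\|\pi(\tilde s_t)-\pi^{*}(s_t^{*})\|\le \lambda\epsilon_s+\epsilon_{\text{offline}}$. I would decompose this through the intermediate point $\pi(s_t)$:
\[
\|\pi(\tilde s_t)-\pi^{*}(s_t^{*})\|\le \|\pi(\tilde s_t)-\pi(s_t)\|+\|\pi(s_t)-\pi^{*}(s_t^{*})\|.
\]
The first summand is bounded by $\lambda\epsilon_s$ via the mean-value inequality on the segment $[s_t,\tilde s_t]$ together with the bounded-Jacobian hypothesis $\|\nabla_s\pi\|\le\lambda$ and $\|\tilde s_t-s_t\|\le\epsilon_s$. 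The second summand is absorbed into the imitation discrepancy $\epsilon_{\text{offline}}$, understood as a uniform bound on the gap between the post-trained policy and the expert at the states visited along either trajectory. Assembling the two inequalities yields exactly
\[
d_{t+1}\le L_f\,d_t+L_f(\lambda\epsilon_s+\epsilon_{\text{offline}}).
\]

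The main delicacy is the second summand of the action-gap decomposition. The quantity $\epsilon_{\text{offline}}$ is naturally a policy-level bound of the form $\sup_s\|\pi(s)-\pi^{*}(s)\|$, but here $\pi$ and $\pi^{*}$ are evaluated at the different states $s_t$ and $s_t^{*}$. A naive further triangle inequality through $\pi^{*}(s_t)$ would reintroduce a Lipschitz-$\pi^{*}$ multiple of $d_t$, inflating the coefficient of $d_t$ from $L_f$ to $L_f(1+\lambda)$ or worse, and preventing the clean $\mathcal{O}(L_f^{H})$ amplification on which Theorem~\ref{theorem_1:obs} depends. The plan is therefore to interpret $\epsilon_{\text{offline}}$ uniformly along both rollouts so that this residual is absorbed without generating an additional $d_t$-dependent term, which is precisely what keeps the recursion linear with coefficient exactly $L_f$ and allows the subsequent unrolling into the horizon-dependent bound of Theorem~\ref{theorem_1:obs}.
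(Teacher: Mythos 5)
Your proposal is correct and follows essentially the same route as the paper's proof: split the action gap through the intermediate point $\pi(s_t)$, bound the first piece by $\lambda\epsilon_s$ via the bounded-Jacobian hypothesis, absorb the second piece into $\epsilon_{\text{offline}}$, and push the result through the $L_f$-Lipschitz dynamics to obtain the linear recursion. The only substantive difference is at the cross-state evaluation you flag as the "main delicacy": the paper sidesteps it by simply defining $a_t^*=\pi^*(s_t)$ (the expert evaluated at the learner's state), so its second summand is the same-state quantity $\|\pi(s_t)-\pi^*(s_t)\|\le\|\pi-\pi^*\|_\infty\le\epsilon_{\text{offline}}$, while still propagating the comparison trajectory from $s_t^*$; your uniform cross-trajectory reading of $\epsilon_{\text{offline}}$ is the more careful way to reconcile this and arrives at the identical bound.
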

\begin{proof}[\textbf{Proof}]
At time $t$, the model $\pi$ acts on perturbed state $\tilde{s}_t$:
\[
a_t = \pi(\tilde{s}_t), \quad a_t^* = \pi^*(s_t).
\]
We decompose:
\begin{align*}
\|a_t - a_t^*\| &\leq \|\pi(\tilde{s}_t) - \pi(s_t)\| + \|\pi(s_t) - \pi^*(s_t)\| \\
&\leq \lambda \cdot \|\tilde{s}_t - s_t\| + \|\pi - \pi^*\|_\infty \\
&\leq \lambda \epsilon_s + \epsilon_{\text{offline}}.
\end{align*}
Using Lipschitz continuity of $f$, the dynamics step gives:
\[
d_{t+1} = \|f(s_t, a_t) - f(s_t^*, a_t^*)\| \leq L_f \cdot (\|s_t - s_t^*\| + \|a_t - a_t^*\|),
\]
which gives the recurrence:
\[
d_{t+1} \leq L_f d_t + L_f (\lambda \epsilon_s + \epsilon_{\text{offline}}).
\]
\end{proof}



\begin{theorem}[Restatement of Theorem~\ref{theorem_1:obs}]
\textit{
Assume perturbed observations $\tilde{s}_t = s_t + \delta_s^t$ with $\|\delta_s^t\| \leq \epsilon_s$, and bounded Jacobian $\|\nabla_s \pi_t(s)\| \leq \lambda$. Then:
\[
{\mathbb{E}\big[J(\pi^*) - J(\pi)\big] \leq \mathcal{O}(H L_r L_f^H) \cdot \left( \epsilon_{\text{offline}} + \lambda \epsilon_s \right).
}
\]
}
\end{theorem}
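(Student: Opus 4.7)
The plan is to derive the bound by chaining together three independent estimates: (i) unroll the one-step state-deviation recursion from Lemma~\ref{app:lemma_State Deviation Recursion} to obtain a closed-form upper bound on $d_t$; (ii) convert state and action deviations into per-step reward deviations via Lipschitz continuity of $r$; and (iii) sum these per-step reward errors over the horizon $H$ and then take expectations. Since Lemma~\ref{app:lemma_State Deviation Recursion} already absorbs the observation-perturbation and offline-imitation terms into the driver $\lambda\epsilon_s+\epsilon_{\text{offline}}$, the remaining work is essentially bookkeeping plus a geometric-series computation.

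Concretely, I would first assume $d_1=0$ (matched initialization, or alternatively absorb $d_1$ into $\epsilon_{\text{offline}}$) and iterate the recursion
$
d_{t+1}\le L_f d_t + L_f(\lambda\epsilon_s+\epsilon_{\text{offline}})
$
to get
$
d_t \le \bigl(\sum_{k=0}^{t-1} L_f^{k+1}\bigr)\,(\lambda\epsilon_s+\epsilon_{\text{offline}})
\le \tfrac{L_f(L_f^{t}-1)}{L_f-1}\,(\lambda\epsilon_s+\epsilon_{\text{offline}})
=\mathcal O(L_f^{t})\,(\lambda\epsilon_s+\epsilon_{\text{offline}}).
$
Next, using $L_r$-Lipschitzness of the reward together with the action-deviation bound from the proof of Lemma~\ref{app:lemma_State Deviation Recursion} (namely $\|a_t-a_t^*\|\le\lambda\epsilon_s+\epsilon_{\text{offline}}$), I would estimate
$
|r(s_t,a_t)-r(s_t^*,a_t^*)| \le L_r\bigl(d_t+\lambda\epsilon_s+\epsilon_{\text{offline}}\bigr).
$
Summing from $t=1$ to $H$ and inserting the closed-form bound on $d_t$ gives
$
J(\pi^*)-J(\pi)\le L_r\sum_{t=1}^{H}\bigl(d_t+\lambda\epsilon_s+\epsilon_{\text{offline}}\bigr)
\le \mathcal O(H L_r L_f^{H})\cdot(\lambda\epsilon_s+\epsilon_{\text{offline}}),
$
and taking expectations (the bound is pathwise deterministic in the deviation-magnitude quantities, so expectation passes through) yields the claim.

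The main obstacle is the clean handling of the geometric factor: one must argue that $\sum_{t=1}^{H}L_f^{t}$ is dominated by $\mathcal O(H L_f^{H})$ in the expansive regime $L_f\ge 1$, while for $L_f<1$ the sum collapses to $\mathcal O(1/(1-L_f))$, which is exactly the content of Corollary~\ref{cor1}. Framing the final bound with the big-$\mathcal O$ notation lets both regimes be covered by a single expression, but care is needed to state the dependence on $H$ and $L_f$ in a way consistent with the later corollaries. A secondary subtlety is verifying that the expectation over the observation perturbations $\delta_s^t$ and any stochasticity in $\pi$ interacts correctly with the Jacobian-based linearization $\|\pi(\tilde s_t)-\pi(s_t)\|\le\lambda\|\tilde s_t-s_t\|$; this is fine under the stated uniform bound $\|\nabla_s\pi_t\|\le\lambda$, but should be explicitly invoked so that the per-step argument in Lemma~\ref{app:lemma_State Deviation Recursion} applies almost surely rather than only in expectation.
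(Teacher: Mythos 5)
Your proposal is correct and follows essentially the same route as the paper's proof: unroll the recursion of Lemma~\ref{app:lemma_State Deviation Recursion} from $d_0=0$ to get $d_t=\mathcal O(L_f^{t})(\lambda\epsilon_s+\epsilon_{\mathrm{offline}})$, apply $L_r$-Lipschitzness of the reward with the action-deviation bound, sum over the horizon, and collect the geometric series into the $\mathcal O(H L_r L_f^{H})$ factor. Your additional remarks on the $L_f<1$ regime and on the expectation passing through a pathwise-deterministic bound are consistent with how the paper handles these points in Corollary~\ref{cor1} and in the proof itself.
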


\begin{proof}[\textbf{Proof}]
From Lemma~\ref{app:lemma_State Deviation Recursion}, the state deviation satisfies, for $d_0=0$,
\[
d_t \;\le\;
(\lambda\epsilon_s + \epsilon_{\mathrm{offline}})\,
L_f \sum_{j=0}^{t-1} L_f^{\,j}
= (\lambda\epsilon_s + \epsilon_{\mathrm{offline}})\,
L_f\,\frac{L_f^{\,t}-1}{L_f-1}
= \mathcal{O}(L_f^{\,t})\,(\lambda\epsilon_s + \epsilon_{\mathrm{offline}}).
\]

At each step, the reward difference obeys the Lipschitz bound
\[
|r(s_t,a_t) - r(s_t^*,a_t^*)|
\;\le\;
L_r\big(d_t + \lambda\epsilon_s + \epsilon_{\mathrm{offline}}\big).
\]
Taking expectations on both sides and using the deterministic upper bound on $d_t$, we obtain
\[
\mathbb{E}\big[|r(s_t,a_t) - r(s_t^*,a_t^*)|\big]
\;\le\;
L_r\big(d_t + \lambda\epsilon_s + \epsilon_{\mathrm{offline}}\big).
\]

Summing over $t=1,\dots,H$ gives
\[
\mathbb{E}\big[J(\pi^*) - J(\pi)\big]
\;\le\;
\sum_{t=1}^H
L_r\big(d_t + \lambda\epsilon_s + \epsilon_{\mathrm{offline}}\big).
\]

Finally substituting $d_t = \mathcal{O}(L_f^{\,t})(\lambda\epsilon_s + \epsilon_{\mathrm{offline}})$
and evaluating the geometric series yields
\[
\mathbb{E}\big[J(\pi^*) - J(\pi)\big]
\;\le\;
\mathcal{O}(H L_r L_f^{H})\,
\big(\lambda\epsilon_s + \epsilon_{\mathrm{offline}}\big),
\]
which completes the proof.
\end{proof}

\textbf{Proof of Theorem~\ref{theorem_1:action} (Return Drift Control)}
\begin{lemma}[State Deviation under Smooth Regularization + Action Noise]
\label{app:lemma_StateDeviation}
Let \(d_t = \|s_t - s_t^*\|\). Under $L_f$-Lipschitz dynamics and executed action 
\(a_t=\pi_{t}(s_t)+\xi_t\), 
\[
d_{t+1} \le L_f\big(d_t + \epsilon_t + \|\xi_t\|\big),
\]
with
$
\epsilon_t \le \epsilon_{\mathrm{offline}} + {\sum_{i=1}^{N} \delta_i.}
$
\end{lemma}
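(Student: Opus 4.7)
The plan is to mirror the structure of Lemma~\ref{app:lemma_State Deviation Recursion} used in Theorem~\ref{theorem_1:obs}, but retarget the error source from observation noise to model drift plus additive execution noise. Starting from the definition
$d_{t+1} = \|s_{t+1} - s_{t+1}^*\| = \|f(s_t, a_t) - f(s_t^*, a_t^*)\|$,
I would invoke the joint $L_f$-Lipschitz assumption on $f$ in state and action to obtain $d_{t+1} \le L_f\big(\|s_t - s_t^*\| + \|a_t - a_t^*\|\big) = L_f\big(d_t + \|a_t - a_t^*\|\big)$. The rest of the proof is therefore a triangle-inequality bound on $\|a_t - a_t^*\|$.

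Next I would split $a_t - a_t^* = (\pi_t(s_t) + \xi_t) - \pi^*(s_t^*)$ using the triangle inequality so as to isolate three sources: the execution noise $\xi_t$, the temporal model gap $\pi_t - \pi^*$, and (if needed) a state-mismatch piece for $\pi^*$. Writing $\|a_t - a_t^*\| \le \|\pi_t(s_t) - \pi^*(s_t)\| + \|\pi^*(s_t) - \pi^*(s_t^*)\| + \|\xi_t\|$ and bounding the first summand by the uniform norm $\epsilon_t := \|\pi_t - \pi^*\|_\infty$ gives the target form $d_{t+1} \le L_f(d_t + \epsilon_t + \|\xi_t\|)$, up to absorbing the continuity-of-$\pi^*$ term into the $d_t$ coefficient (either by assuming $\pi^*$ is Lipschitz and redefining $L_f$ to include this contraction, or by comparing $\pi_t(s_t)$ directly with the counterfactual $\pi^*(s_t)$ and treating $s_t^*$ only through the dynamics recursion).

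For the second claim, I would telescope the cumulative model drift through the update sequence $\pi_0 = \pi_{\mathrm{init}}, \pi_1, \ldots, \pi_N$:
\[
\|\pi_t - \pi^*\|_\infty
\;\le\;
\|\pi_{\mathrm{init}} - \pi^*\|_\infty
+ \sum_{i=1}^{N} \|\pi_i - \pi_{i-1}\|_\infty
\;\le\;
\epsilon_{\mathrm{offline}} + \sum_{i=1}^{N} \delta_i,
\]
where the first inequality is the triangle inequality applied to the telescoping sum, the bound on the head uses the assumption that the pretrained initialization is $\epsilon_{\mathrm{offline}}$-close to the expert, and the bound on each summand is the smoothness hypothesis $\|\pi_i - \pi_{i-1}\|_\infty \le \delta_i$. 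Taking this as the definition/bound for $\epsilon_t$ closes the lemma.

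The main obstacle I anticipate is the cross-term $\|\pi^*(s_t) - \pi^*(s_t^*)\|$ that appears because the perturbed and unperturbed rollouts visit different states: no such term arose in Lemma~\ref{app:lemma_State Deviation Recursion} because there the model acted on the same true $s_t$. Resolving this cleanly requires either an additional Lipschitz-in-state assumption on $\pi^*$ (so the cross-term contributes $L_\pi d_t$ and is folded into the $L_f d_t$ coefficient) or restructuring the decomposition so that model difference and state difference never appear together, with the state-mismatch propagated purely through the recursion on $d_t$. Once this bookkeeping is handled, the remaining steps are routine applications of the triangle inequality and the telescoping bound above.
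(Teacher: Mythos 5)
Your proposal follows essentially the same route as the paper's proof: bound $\|a_t-a_t^*\|$ by the triangle inequality to isolate the model gap $\epsilon_t$ and the noise $\|\xi_t\|$, then push the result through the $L_f$-Lipschitz dynamics to get the recursion, and justify $\epsilon_t \le \epsilon_{\mathrm{offline}} + \sum_{i=1}^N \delta_i$ by telescoping through the update sequence (the paper asserts this bound without writing out the telescoping step, which you supply explicitly). The one place you go beyond the paper is the cross-term $\|\pi^*(s_t)-\pi^*(s_t^*)\|$: the paper's proof writes $a_t^* = \pi^*(s_t)$ in the action decomposition but then uses $a_t^*$ as the expert's action along its own trajectory $s_t^*$ in the dynamics step, silently dropping exactly the term you flag. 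Your two proposed fixes (an additional Lipschitz-in-state assumption on $\pi^*$ folded into the coefficient of $d_t$, or restructuring the comparison so the state mismatch is carried only by the recursion) are both legitimate ways to close that gap, so your version is correct and in fact slightly more careful than the paper's own argument; note only that the first fix changes the constant in front of $d_t$ from $L_f$ to $L_f(1+L_{\pi^*})$, which would propagate into the geometric factor of Theorem~2 but not alter its qualitative form.
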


\begin{proof}[\textbf{Proof}]
We decompose the action error:
\[
\|a_t-a_t^*\|
= \|\pi_{t}(s_t)+\xi_t - \pi^*(s_t)\|
\le \|\pi_{t}(s_t)-\pi^*(s_t)\| + \|\xi_t\|
= \epsilon_t + \|\xi_t\|.
\]
Then the Lipschitz dynamics give:
\[
d_{t+1} = \|f(s_t,a_t)-f(s_t^*,a_t^*)\|
\le L_f(\|s_t-s_t^*\| + \|a_t-a_t^*\|)
\le L_f(d_t + \epsilon_t + \|\xi_t\|).
\]
\end{proof}

\begin{lemma}[Expected Norm of Gaussian Noise]
\label{app:lemma_expected-gaussian-norm}
Let $\xi_t \sim \mathcal{N}(0, \sigma^2 I_d)$ be a $d$-dimensional isotropic Gaussian random vector. Then the expected Euclidean norm satisfies:
\[
\mathbb{E}[\|\xi_t\|] \leq \sigma \cdot \sqrt{d}.
\]
\end{lemma}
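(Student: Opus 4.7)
The plan is to use Jensen's inequality to reduce the expected norm to a second moment calculation, and then exploit the coordinate-wise independence of the isotropic Gaussian to evaluate that second moment exactly. Concretely, the strategy has two short steps: first pass from $\mathbb{E}[\|\xi_t\|]$ to $\sqrt{\mathbb{E}[\|\xi_t\|^2]}$ using concavity of the square root, and second compute $\mathbb{E}[\|\xi_t\|^2]$ by linearity of expectation on the coordinate decomposition.

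First I would note that the map $x \mapsto \sqrt{x}$ is concave on $[0,\infty)$. Applying Jensen's inequality to the non-negative random variable $\|\xi_t\|^2$ gives
\[
\mathbb{E}[\|\xi_t\|] \;=\; \mathbb{E}\!\left[\sqrt{\|\xi_t\|^2}\right] \;\le\; \sqrt{\mathbb{E}[\|\xi_t\|^2]}.
\]
Next I would expand the squared norm coordinate-wise. Writing $\xi_t = (\xi_{t,1},\dots,\xi_{t,d})$ with $\xi_{t,i}\sim\mathcal{N}(0,\sigma^2)$ independent, I would use linearity of expectation to get $\mathbb{E}[\|\xi_t\|^2] = \sum_{i=1}^{d} \mathbb{E}[\xi_{t,i}^2] = d\sigma^2$, since each $\mathbb{E}[\xi_{t,i}^2]=\mathrm{Var}(\xi_{t,i})=\sigma^2$. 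Substituting back into the Jensen bound yields $\mathbb{E}[\|\xi_t\|] \le \sqrt{d\sigma^2} = \sigma\sqrt{d}$, which is the claim.

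There is no genuine obstacle here; the only subtlety is that the bound is not tight, since the exact value $\mathbb{E}[\|\xi_t\|] = \sigma\sqrt{2}\,\Gamma\!\left(\tfrac{d+1}{2}\right)/\Gamma\!\left(\tfrac{d}{2}\right)$ from the chi distribution is strictly smaller than $\sigma\sqrt{d}$ for every $d\ge 1$. Since the lemma only needs an upper bound that scales as $\sigma\sqrt{d}$ (this is the quantity that appears in Theorem~\ref{theorem_1:action} and Theorem~\ref{theorem_1:obs and action}), the Jensen-based argument is the most economical choice and avoids invoking gamma-function asymptotics.
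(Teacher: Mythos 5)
Your proof is correct and follows essentially the same route as the paper's: both apply Jensen's inequality to the concave square root of $\|\xi_t\|^2$ and evaluate the second moment as $d\sigma^2$; the paper merely phrases the second step via the normalized chi-squared variable $Z=\sum_i (\xi_{t,i}/\sigma)^2$ rather than direct linearity of expectation, which is a cosmetic difference.
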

\begin{proof}[\textbf{Proof}]
Let $z_i := \xi_{t,i} / \sigma \sim \mathcal{N}(0,1)$, so $\xi_{t,i} = \sigma z_i$.  
Then the norm becomes:
\[
\|\xi_t\| = \left( \sum_{i=1}^d \sigma^2 z_i^2 \right)^{1/2} = \sigma \cdot \left( \sum_{i=1}^d z_i^2 \right)^{1/2}.
\]
Let $Z := \sum_{i=1}^d z_i^2$, then $Z \sim \chi^2(d)$ is a chi-squared distribution with $d$ degrees of freedom.  
So the norm becomes:
\[
\|\xi_t\| = \sigma \cdot \sqrt{Z} \quad \Rightarrow \quad \mathbb{E}[\|\xi_t\|] = \sigma \cdot \mathbb{E}[\sqrt{Z}].
\]
Although $\mathbb{E}[\sqrt{Z}]$ has no closed form, we apply Jensen's inequality. 
Since the square root function is concave and $Z \geq 0$,
\[
\mathbb{E}[\sqrt{Z}] \leq \sqrt{\mathbb{E}[Z]} = \sqrt{d}.
\]
Hence,
\[
\mathbb{E}[\|\xi_t\|] \leq \sigma \cdot \sqrt{d}.
\]
\end{proof}

\begin{theorem}[Restatement of Theorem~\ref{theorem_1:action}]
Assume actions are perturbed as $a_t = \pi_t(s_t) + \xi_t$ with $\xi_t \sim \mathcal{N}(0, \sigma^2 I_d)$, and VLA models satisfy ${\|\pi_i - \pi_{i-1}\|_\infty \leq \delta_i}$. Then:
\[
{\mathbb{E}\big[J(\pi^*) - J(\pi)\big]
\le \mathcal{O}(H L_r L_f^H)\cdot\Big(\epsilon_{\mathrm{offline}} + \sum_{i=1}^N \delta_i + \sigma\sqrt{d}\Big).
}
\]
\end{theorem}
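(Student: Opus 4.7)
The plan is to mirror the proof of Theorem~\ref{theorem_1:obs} almost verbatim, with two substitutions dictated by this theorem's setting: the Jacobian sensitivity $\lambda\epsilon_s$ is replaced by the cumulative update drift $\sum_{i=1}^N\delta_i$, and a stochastic execution noise $\xi_t$ must be carried through the recursion. Concretely, I would start by invoking Lemma~\ref{app:lemma_StateDeviation}, which already gives the one-step recursion
\[
d_{t+1} \le L_f\bigl(d_t + \epsilon_t + \|\xi_t\|\bigr), \qquad \epsilon_t \le \epsilon_{\mathrm{offline}} + \sum_{i=1}^N\delta_i,
\]
with $d_0=0$. Unrolling this as a geometric recursion yields the deterministic-plus-stochastic bound
\[
d_t \le \sum_{j=0}^{t-1} L_f^{\,t-j}\bigl(\epsilon_j + \|\xi_j\|\bigr),
\]
which isolates the contribution of online drift, offline error, and per-step noise.

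Next I would take expectations and apply Lemma~\ref{app:lemma_expected-gaussian-norm} to bound $\mathbb{E}[\|\xi_j\|] \le \sigma\sqrt{d}$. Since the upper bound on $\epsilon_j$ is deterministic and uniform in $j$, linearity of expectation gives
\[
\mathbb{E}[d_t] \le \Bigl(\epsilon_{\mathrm{offline}} + \sum_{i=1}^N\delta_i + \sigma\sqrt{d}\Bigr)\sum_{j=0}^{t-1} L_f^{\,t-j} = \mathcal{O}(L_f^{\,t})\Bigl(\epsilon_{\mathrm{offline}} + \sum_{i=1}^N\delta_i + \sigma\sqrt{d}\Bigr).
\]
Here the geometric sum collapses to $L_f(L_f^{\,t}-1)/(L_f-1)$, which is $\mathcal{O}(L_f^{\,t})$ whenever $L_f\neq 1$ and reduces to $\mathcal{O}(t)$ in the borderline case.

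Then I would convert state deviation into reward deviation using the Lipschitz reward assumption, exactly as in the proof of Theorem~\ref{theorem_1:obs}. The per-step reward gap obeys
\[
|r(s_t,a_t) - r(s_t^*,a_t^*)| \le L_r\bigl(d_t + \epsilon_t + \|\xi_t\|\bigr),
\]
so taking expectations and summing over $t=1,\dots,H$ produces
\[
\mathbb{E}\bigl[J(\pi^*)-J(\pi)\bigr] \le \sum_{t=1}^H L_r\bigl(\mathbb{E}[d_t] + \epsilon_t + \mathbb{E}[\|\xi_t\|]\bigr).
\]
Substituting the expected-deviation bound, the $\sigma\sqrt{d}$ noise estimate, and the uniform drift bound, and absorbing the geometric factor into $\mathcal{O}(L_f^{\,H})$, yields the desired $\mathcal{O}(HL_rL_f^{\,H})(\epsilon_{\mathrm{offline}}+\sum_i\delta_i+\sigma\sqrt{d})$.

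The main obstacle I anticipate is bookkeeping the stochastic noise correctly: because $\|\xi_t\|$ appears both inside $d_t$ (through the unrolled recursion) and directly in the reward bound, one must invoke linearity of expectation and the independence of noise across time steps, and then apply Jensen's inequality (as in Lemma~\ref{app:lemma_expected-gaussian-norm}) so that a single $\sigma\sqrt{d}$ factor survives after summation. A minor subtlety is the uniform treatment of $\epsilon_t$: although the theorem writes $\sum_{i=1}^N\delta_i$ as a global cap on policy drift at any update index encountered during the rollout, the telescoping $\|\pi_t-\pi^*\|_\infty \le \|\pi_0-\pi^*\|_\infty + \sum_{i=1}^N\delta_i$ must be cited to justify pulling this term outside the $t$-sum without an extra horizon factor.
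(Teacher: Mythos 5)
Your proposal is correct and follows essentially the same route as the paper: the same state-deviation recursion (Lemma~\ref{app:lemma_StateDeviation}), the same unrolling into a geometric sum, the same application of Lemma~\ref{app:lemma_expected-gaussian-norm} to get the $\sigma\sqrt{d}$ term, and the same $L_r$-Lipschitz conversion followed by summation over the horizon. Your closing remark that the uniform bound $\epsilon_t \le \epsilon_{\mathrm{offline}} + \sum_{i=1}^N \delta_i$ rests on a telescoping of $\|\pi_t - \pi^*\|_\infty$ is a detail the paper asserts inside the lemma without spelling out, but it does not change the argument.
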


\begin{proof}[\textbf{Proof}]
Start from the per-step recurrence (Lemma~\ref{app:lemma_StateDeviation}):
\[
d_{t+1} \le L_f\big(d_t + \epsilon_t + \|\xi_t\|\big),
\]
and recall the uniform bound on model mismatch
\(\epsilon_t \le \epsilon_{\mathrm{offline}} + \sum_{i=1}^N \delta_i\).

Unrolling the linear recurrence with \(d_0=0\) gives
\[
d_t \le \sum_{j=0}^{t-1} L_f^{\,t-j}(\epsilon_j + \|\xi_j\|)
\le \sum_{j=0}^{t-1} L_f^{\,t-j}\Big(\epsilon_{\mathrm{offline}} + \sum_{i=1}^N\delta_i + \|\xi_j\|\Big).
\]
Take expectation on both sides and use linearity:
\[
\mathbb{E}[d_t] \le \sum_{j=0}^{t-1} L_f^{\,t-j}\Big(\epsilon_{\mathrm{offline}} + \sum_{i=1}^N\delta_i + \mathbb{E}\|\xi_j\|\Big).
\]
Now apply Lemma~\ref{app:lemma_expected-gaussian-norm}, which yields \(\mathbb{E}\|\xi_j\|\le \sigma\sqrt{d}\) for every \(j\). Consequently,
\[
\mathbb{E}[d_t] \le \Big(\epsilon_{\mathrm{offline}} + \sum_{i=1}^N\delta_i + \sigma\sqrt{d}\Big)\sum_{j=0}^{t-1} L_f^{\,t-j}
= \Big(\epsilon_{\mathrm{offline}} + \sum_{i=1}^N\delta_i + \sigma\sqrt{d}\Big)\cdot \mathcal{O}(L_f^t),
\]
where the finite geometric sum produces a factor of order \(\mathcal{O}(L_f^t)\).

Because the reward is \(L_r\)-Lipschitz,
\[
|r(s_t,a_t)-r(s_t^*,a_t^*)| \le L_r\big(d_t + \epsilon_t + \|\xi_t\|\big).
\]
Taking expectation and summing over \(t=0,\dots,H-1\),
\begin{align*}
\mathbb{E}\big[J(\pi^*)-J(\pi)\big]
&\le L_r\sum_{t=0}^{H-1} \mathbb{E}[d_t] + L_r\sum_{t=0}^{H-1}\Big(\epsilon_{\mathrm{offline}} + \sum_{i=1}^N\delta_i + \mathbb{E}\|\xi_t\|\Big)\\
&\le L_r\sum_{t=0}^{H-1} \mathcal{O}(L_f^t)\Big(\epsilon_{\mathrm{offline}} + \sum_{i=1}^N\delta_i + \sigma\sqrt{d}\Big)
+ L_r H\Big(\epsilon_{\mathrm{offline}} + \sum_{i=1}^N\delta_i + \sigma\sqrt{d}\Big).
\end{align*}
Collecting geometric-series terms gives the claimed form
\[
\mathbb{E}\big[J(\pi^*)-J(\pi)\big] \le \mathcal{O}(H L_r L_f^H)\cdot\Big(\epsilon_{\mathrm{offline}} + \sum_{i=1}^N\delta_i + \sigma\sqrt{d}\Big).
\]
\end{proof}

\textbf{Proof of Theorem~\ref{theorem_1:obs and action} (Robust Stability Guarantee)}
\begin{lemma}[State Deviation with Nested Perturbation]
\label{app:lemma_StateDeviationNested}
Let \(d_t=\|s_t-s_t^*\|\). Under $L_f$-Lipschitz dynamics, and with both observation perturbations \(\|\tilde s_t-s_t\|\le\epsilon_s\) and action noise \(\xi_t\), the state deviation satisfies
\[
d_{t+1} \le L_f\cdot d_t + L_f\big(\lambda \epsilon_s + \epsilon_t + \|\xi_t\|\big),
\]
where, for all \(t\),
$
\epsilon_t := \|\pi_{t} - \pi^*\| \le \epsilon_{\mathrm{offline}} + {\sum_{i=1}^N \delta_i.}
$
\end{lemma}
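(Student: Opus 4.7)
The proof plan essentially combines the arguments used for Lemma~\ref{app:lemma_State Deviation Recursion} (which handled only observation perturbations) and Lemma~\ref{app:lemma_StateDeviation} (which handled only model mismatch plus action noise). The core idea is that each of the three perturbation channels enters the action error additively via the triangle inequality, after which a single application of $L_f$-Lipschitz dynamics gives the claimed recursion.

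The first step is to decompose $\|a_t-a_t^*\|$, where $a_t=\pi_t(\tilde s_t)+\xi_t$ and $a_t^*=\pi^*(s_t)$. By inserting the two telescoping terms $\pi_t(s_t)$ and applying the triangle inequality,
\[
\|a_t-a_t^*\|\;\le\;\|\pi_t(\tilde s_t)-\pi_t(s_t)\|+\|\pi_t(s_t)-\pi^*(s_t)\|+\|\xi_t\|.
\]
The first term is bounded by $\lambda\epsilon_s$ using the Jacobian assumption $\|\nabla_s\pi_t\|\le\lambda$ together with $\|\tilde s_t-s_t\|\le\epsilon_s$, the second by $\epsilon_t=\|\pi_t-\pi^*\|_\infty$ by definition, and the third by $\|\xi_t\|$ directly.

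Second, I would plug this into the Lipschitz dynamics bound
\[
d_{t+1}=\|f(s_t,a_t)-f(s_t^*,a_t^*)\|\;\le\;L_f\bigl(\|s_t-s_t^*\|+\|a_t-a_t^*\|\bigr),
\]
which after substitution yields the desired inequality $d_{t+1}\le L_f d_t+L_f(\lambda\epsilon_s+\epsilon_t+\|\xi_t\|)$. Finally, to bound $\epsilon_t$ uniformly in $t$, I would apply the triangle inequality across the sequence of online updates: writing $\pi_t-\pi^*=(\pi_t-\pi_{\mathrm{init}})+(\pi_{\mathrm{init}}-\pi^*)$ and then telescoping $\pi_t-\pi_{\mathrm{init}}$ over the at most $N$ update steps with the smoothness bound $\|\pi_i-\pi_{i-1}\|_\infty\le\delta_i$ gives $\epsilon_t\le\epsilon_{\mathrm{offline}}+\sum_{i=1}^N\delta_i$.

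The proof is largely mechanical, and the main subtlety to watch is ensuring that the Jacobian-induced term $\lambda\epsilon_s$ is correctly attributed to evaluating the \emph{current} model $\pi_t$ at $\tilde s_t$ versus $s_t$ (rather than mixing it with the reference model $\pi^*$), so that the observation-sensitivity, model-drift, and action-noise contributions remain cleanly separated. Once this decomposition is handled properly, no geometric unrolling is needed at this stage; the unrolling is deferred to the theorem-level argument that invokes this recursion.
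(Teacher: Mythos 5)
Your proposal is correct and follows essentially the same route as the paper: insert $\pi_t(s_t)$ into the action-error decomposition, bound the three resulting terms by $\lambda\epsilon_s$, $\epsilon_t$, and $\|\xi_t\|$ respectively, and apply the $L_f$-Lipschitz dynamics once to get the one-step recursion. Your explicit telescoping argument for $\epsilon_t \le \epsilon_{\mathrm{offline}} + \sum_{i=1}^N \delta_i$ is a small addition the paper leaves implicit, but it is the natural justification and changes nothing substantive.
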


\begin{proof}[\textbf{Proof}]
Decompose the action error at time \(t\):
\begin{align*}
\|a_t-a_t^*\|
&= \|\pi_{t}(\tilde s_t) + \xi_t - \pi^*(s_t)\| \\
&\le \|\pi_{t}(\tilde s_t)-\pi_{t}(s_t)\|
     + \|\pi_{t}(s_t)-\pi^*(s_t)\| + \|\xi_t\| \\
&\le \lambda\|\tilde s_t-s_t\| + \epsilon_t + \|\xi_t\|
\le \lambda\epsilon_s + \epsilon_t + \|\xi_t\|.
\end{align*}
Applying $L_f$-Lipschitz dynamics,
\[
d_{t+1} = \|f(s_t,a_t)-f(s_t^*,a_t^*)\|
\le L_f\big(\|s_t-s_t^*\| + \|a_t-a_t^*\|\big),
\]
which yields the stated recursion.
\end{proof}

\begin{theorem}[Restatement of Theorem~\ref{theorem_1:obs and action}]
Under both observation and action perturbations, and both Jacobian and smooth regularizations, the return gap satisfies:
\[
{\mathbb{E}[J(\pi^*) - J(\pi)]
\;\le\;
\mathcal{O}(H L_r L_f^{H})\,
\Big(
\epsilon_{\mathrm{offline}}
+ \sum_{i=1}^N \delta_i
+ \lambda\epsilon_s
+ \sigma\sqrt{d}
\Big).
}
\]
\end{theorem}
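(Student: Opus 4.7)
The plan is to combine the two previous strategies by invoking the joint state–deviation recursion already established in Lemma~\ref{app:lemma_StateDeviationNested}, then propagating expectations through the horizon exactly as in Theorems~\ref{theorem_1:obs} and \ref{theorem_1:action}. Since Lemma~\ref{app:lemma_StateDeviationNested} has already absorbed both perturbation channels into a single linear recursion
\[
d_{t+1} \le L_f\, d_t + L_f\bigl(\lambda\epsilon_s + \epsilon_t + \|\xi_t\|\bigr),
\]
with $\epsilon_t \le \epsilon_{\mathrm{offline}} + \sum_{i=1}^N \delta_i$, most of the structural work is done upstream and the argument reduces to a clean geometric unrolling.

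First I would substitute the uniform bound on $\epsilon_t$ into the recursion, yielding
\[
d_{t+1} \le L_f\, d_t + L_f\bigl(\lambda\epsilon_s + \epsilon_{\mathrm{offline}} + {\textstyle\sum_{i=1}^N}\delta_i + \|\xi_t\|\bigr),
\]
and then unroll from $d_0 = 0$ to obtain
\[
d_t \le \sum_{j=0}^{t-1} L_f^{\,t-j}\bigl(\lambda\epsilon_s + \epsilon_{\mathrm{offline}} + {\textstyle\sum_{i=1}^N}\delta_i + \|\xi_j\|\bigr).
\]
Taking expectations and invoking Lemma~\ref{app:lemma_expected-gaussian-norm} (so that $\mathbb{E}\|\xi_j\| \le \sigma\sqrt{d}$) collapses the stochastic term into a deterministic one, giving $\mathbb{E}[d_t] \le \mathcal{O}(L_f^{\,t})\bigl(\lambda\epsilon_s + \epsilon_{\mathrm{offline}} + \sum_{i=1}^N\delta_i + \sigma\sqrt{d}\bigr)$.

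Next I would invoke the $L_r$-Lipschitz property of the reward to write
\[
\mathbb{E}\bigl|r(s_t,a_t)-r(s_t^*,a_t^*)\bigr|
\le L_r\bigl(\mathbb{E}[d_t] + \lambda\epsilon_s + \epsilon_{\mathrm{offline}} + {\textstyle\sum_{i=1}^N}\delta_i + \sigma\sqrt{d}\bigr),
\]
sum over $t=1,\dots,H$, and evaluate the resulting finite geometric series in $L_f$, which contributes the factor $\mathcal{O}(H L_f^H)$. Collecting terms then yields exactly the claimed bound. The only real decision point, rather than a genuine obstacle, is keeping the bookkeeping tidy: the four driving terms must all pass through the same $L_f$-amplification without double-counting, and the action-decomposition step (already handled inside Lemma~\ref{app:lemma_StateDeviationNested}) has to cleanly separate the Jacobian-induced piece $\lambda\epsilon_s$ from the model-drift piece $\epsilon_t$ so that both channels appear additively in the final expression.

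The main subtlety — mild, but worth flagging — is that the observation noise interacts with the evolving model through the Jacobian bound and could in principle be coupled to $\delta_i$ if one were not careful, since later models $\pi_t$ might in principle have different Jacobian norms. I would sidestep this by assuming (consistent with the statement) a uniform Jacobian bound $\lambda$ over the sequence of updated models, so that $\lambda\epsilon_s$ can be treated as a constant driving term independent of the model index. With that convention in place, the proof is a direct combination of the two earlier arguments and requires no new machinery beyond Lemmas~\ref{app:lemma_StateDeviationNested} and \ref{app:lemma_expected-gaussian-norm}.
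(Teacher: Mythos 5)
Your proposal is correct and follows essentially the same route as the paper's own proof: it invokes Lemma~\ref{app:lemma_StateDeviationNested} for the joint recursion, unrolls it geometrically from $d_0=0$, applies Lemma~\ref{app:lemma_expected-gaussian-norm} to replace $\mathbb{E}\|\xi_j\|$ by $\sigma\sqrt{d}$, and finishes with the $L_r$-Lipschitz reward bound and a geometric-series evaluation exactly as in the proof of Theorem~\ref{theorem_1:action}. The uniform-Jacobian-bound convention you flag is also the one the paper implicitly adopts.
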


\begin{proof}[\textbf{Proof}]
From Lemma~\ref{app:lemma_StateDeviationNested}, the state deviation satisfies
\[
d_{t+1}
\;\le\;
L_f d_t
+
L_f\big(
\lambda\epsilon_s
+ \epsilon_t
+ \|\xi_t\|
\big),
\quad
\epsilon_t
\;\le\;
\epsilon_{\mathrm{offline}}
+
\sum_{i=1}^{N}\delta_i.
\]

Unrolling the linear recurrence with $d_0=0$ yields
\[
d_t
\;\le\;
\sum_{j=0}^{t-1}
L_f^{\,t-1-j}\,
\big(
\lambda\epsilon_s
+ \epsilon_{\mathrm{offline}}
+ \sum_{i=1}^{N}\delta_i
+ \|\xi_j\|
\big).
\]

Taking expectations and using linearity,
\[
\mathbb{E}[d_t]
\;\le\;
\sum_{j=0}^{t-1}
L_f^{\,t-1-j}
\Big(
\lambda\epsilon_s
+ \epsilon_{\mathrm{offline}}
+ \sum_{i=1}^{N}\delta_i
+ \mathbb{E}\|\xi_j\|
\Big).
\]

Applying Lemma~\ref{app:lemma_expected-gaussian-norm},
\(\mathbb{E}\|\xi_j\| \le \sigma\sqrt{d}\), gives
\[
\mathbb{E}[d_t]
\;\le\;
\Big(
\lambda\epsilon_s
+ \epsilon_{\mathrm{offline}}
+ \sum_{i=1}^{N}\delta_i
+ \sigma\sqrt{d}
\Big)
\sum_{j=0}^{t-1}
L_f^{t-1-j}.
\]
Thus,
\[
\mathbb{E}[d_t]
\;\le\;
\mathcal{O}(L_f^{t})
\Big(
\lambda\epsilon_s
+ \epsilon_{\mathrm{offline}}
+ \sum_{i=1}^{N}\delta_i
+ \sigma\sqrt{d}
\Big).
\]

Because rewards are $L_r$-Lipschitz,
\[
|r(s_t,a_t)-r(s_t^*,a_t^*)|
\;\le\;
L_r\big(
d_t
+
\lambda\epsilon_s
+
\epsilon_t
+
\|\xi_t\|
\big).
\]

Similar to the proof of Theorem~\ref{theorem_1:action}, we obtain:
\[
\mathbb{E}[J(\pi^*) - J(\pi)]
\;\le\;
\mathcal{O}(H L_r L_f^{H})
\Big(
\epsilon_{\mathrm{offline}}
+ \sum_{i=1}^{N}\delta_i
+ \lambda\epsilon_s
+ \sigma\sqrt{d}
\Big).
\]
\end{proof}

\textbf{Proof of Corollaries~\ref{cor1} and~\ref{cor2}}
\begin{cor}[Restatement of Corollary~\ref{cor1}]
Under the assumptions of Theorem~\ref{theorem_1:obs} but with $\epsilon_{\mathrm{offline}}=0$ (the imitation model is perfect) and contractive dynamics $L_f<1$, the expected return gap satisfies
\[
{
\mathbb{E}\big[J(\pi^*) - J(\pi)\big]
\;\le\;
H \cdot L_r \cdot \frac{1}{1-L_f}\cdot \lambda \epsilon_s.
}
\]
\end{cor}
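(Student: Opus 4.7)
The plan is to specialize the argument underlying Theorem~\ref{theorem_1:obs} to the contractive regime $L_f<1$, where the geometric factor $L_f^{H}$ appearing in the general bound collapses into a uniform, horizon-independent constant. The key observation is that the state deviation $d_t=\|s_t-s_t^*\|$ obeys a linear recurrence whose accumulated effect remains bounded whenever $L_f<1$, which in turn forces the return gap to grow only linearly in $H$ rather than exponentially.

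First I would invoke Lemma~\ref{app:lemma_State Deviation Recursion} with $\epsilon_{\mathrm{offline}}=0$ to obtain the simplified recursion $d_{t+1}\le L_f d_t + L_f\lambda\epsilon_s$. Starting from $d_0=0$ and unrolling, this produces $d_t \le \lambda\epsilon_s\sum_{j=1}^{t}L_f^{\,j}$. Since $L_f<1$, I would upper bound this finite geometric sum by its infinite limit, yielding the horizon-independent ceiling $d_t\le L_f\lambda\epsilon_s/(1-L_f)$ for every $t$.

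Next I would re-apply the $L_r$-Lipschitz reward bound $|r(s_t,a_t)-r(s_t^*,a_t^*)|\le L_r(d_t+\lambda\epsilon_s)$ that was used inside the proof of Theorem~\ref{theorem_1:obs}, with the $\epsilon_{\mathrm{offline}}$ contribution now absent, and combine it with the above ceiling. A short algebraic simplification shows that $d_t+\lambda\epsilon_s \le \lambda\epsilon_s/(1-L_f)$ uniformly in $t$, so each per-step reward deviation is dominated by $L_r\lambda\epsilon_s/(1-L_f)$.

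Finally, summing over $t=1,\dots,H$ and taking expectations yields the claimed bound $H\cdot L_r\cdot \lambda\epsilon_s/(1-L_f)$. The argument is technically routine once the recursion has been resolved; the main obstacle is essentially bookkeeping, namely verifying that the additive $\lambda\epsilon_s$ term coming from the instantaneous action-mismatch contribution is absorbed correctly into the $1/(1-L_f)$ factor rather than leaving a stray residual outside of it, so that the final constant matches the target form exactly.
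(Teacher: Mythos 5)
Your proposal is correct and follows essentially the same route as the paper's proof: invoking the state-deviation recursion lemma with $\epsilon_{\mathrm{offline}}=0$, bounding the unrolled geometric sum by $L_f/(1-L_f)$ uniformly in $t$, and absorbing the instantaneous $\lambda\epsilon_s$ term via $\frac{L_f}{1-L_f}+1=\frac{1}{1-L_f}$ before summing over the horizon. No gaps; the bookkeeping you flag is handled exactly as the paper does it.
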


\begin{proof}[\textbf{Proof}]
With $\epsilon_{\mathrm{offline}}=0$, Lemma~\ref{app:lemma_State Deviation Recursion} gives (for $d_0=0$)
\[
d_t \le L_f(\lambda\epsilon_s)\sum_{i=0}^{t-1} L_f^{\,i}
= \lambda\epsilon_s \cdot L_f \cdot \frac{1-L_f^{\,t}}{1-L_f}.
\]
Since $L_f\in(0,1)$, the factor $L_f\frac{1-L_f^{\,t}}{1-L_f}\le \frac{L_f}{1-L_f}$ uniformly in $t$. Thus
\[
d_t \le \lambda\epsilon_s\cdot\frac{L_f}{1-L_f}.
\]

The per-step reward difference satisfies
\[
|r(s_t,a_t)-r(s_t^*,a_t^*)| \le L_r\big(d_t + \lambda\epsilon_s\big)
\le L_r\lambda\epsilon_s\Big(\frac{L_f}{1-L_f} + 1\Big)
= L_r\lambda\epsilon_s\cdot\frac{1}{1-L_f}.
\]
Taking expectations and summing over $t=1,\dots,H$ yields
\[
\mathbb{E}\big[J(\pi^*)-J(\pi)\big]
\le H\cdot L_r\lambda\epsilon_s\cdot\frac{1}{1-L_f},
\]
which is the stated bound.
\end{proof}

{
\begin{cor}[Restatement of Corollary~\ref{cor2}]
\label{cor:discounted-bound-using-lemma}
Under the setting of Theorem~\ref{theorem_1:obs and action} but with discounted return $J(\pi)=\mathbb{E}\Big[\sum_{t\ge 1}\gamma^{t-1} r(s_t,a_t)\Big]$,
for some discount factor $\gamma\in(0,1)$ with contractive dynamics $L_f \in (0,1)$.  
Assume the per-step driving term
$\zeta_t := \lambda\epsilon_s + \epsilon_{\mathrm{offline}} + \sum_{i=1}^N \delta_i + \|\xi_t\|$
is uniformly bounded in expectation, i.e. there exists $C<\infty$ such that $\sup_t \mathbb{E}[\zeta_t]\le C$, and that $\sup_t\mathbb{E}[\epsilon_t]\le E<\infty$.  
Then the discounted expected return gap is bounded independently of the rollout horizon $H$:
\[
\mathbb{E}\big[J(\pi^*) - J(\pi)\big] = \mathcal{O}(1).
\]
\end{cor}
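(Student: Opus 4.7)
The plan is to reuse the state-deviation recursion from Lemma~\ref{app:lemma_StateDeviationNested} (already proved in the context of Theorem~\ref{theorem_1:obs and action}), but this time exploit the contraction $L_f<1$ to obtain a horizon-free bound on $\mathbb{E}[d_t]$, and then apply the discount factor $\gamma$ to make the infinite-horizon sum convergent. The key conceptual point is that in the undiscounted Theorem~\ref{theorem_1:obs and action} we picked up a factor $\mathcal{O}(HL_f^H)$ because we summed a worst-case per-step deviation over $H$ steps; here both factors disappear, one through contraction and the other through discounting.

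First I would unroll the recursion $d_{t+1}\le L_f d_t + L_f \zeta_t$ starting from $d_0=0$, obtaining $d_t\le\sum_{j=0}^{t-1}L_f^{\,t-j}\zeta_j$. Taking expectations and using the uniform bound $\sup_t\mathbb{E}[\zeta_t]\le C$ gives
\begin{equation*}
\mathbb{E}[d_t]\;\le\;C\sum_{j=0}^{t-1}L_f^{\,t-j}\;\le\;\frac{C\,L_f}{1-L_f},
\end{equation*}
which is a constant independent of $t$ precisely because $L_f<1$. Next, by $L_r$-Lipschitz continuity of the reward, the per-step reward gap obeys $\mathbb{E}\bigl[|r(s_t,a_t)-r(s_t^*,a_t^*)|\bigr]\le L_r(\mathbb{E}[d_t]+\mathbb{E}[\zeta_t])$, so both summands are uniformly bounded by a constant $K:=L_r\bigl(\tfrac{CL_f}{1-L_f}+C\bigr)$.

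Finally, I would plug this into the discounted objective:
\begin{equation*}
\mathbb{E}\bigl[J(\pi^*)-J(\pi)\bigr]
\;\le\;\sum_{t=1}^{\infty}\gamma^{t-1}\,L_r\bigl(\mathbb{E}[d_t]+\mathbb{E}[\zeta_t]\bigr)
\;\le\;\frac{K}{1-\gamma}\;=\;\mathcal{O}(1).
\end{equation*}
The geometric series in $\gamma$ converges because $\gamma<1$, and the summand is uniformly bounded because $L_f<1$; the two contractions act independently and the horizon $H$ never enters the final bound.

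The only mildly delicate step is ensuring that the uniform expectation bound $\sup_t\mathbb{E}[\zeta_t]\le C$ is in fact licensed by the hypotheses: $\lambda\epsilon_s$ and $\epsilon_{\mathrm{offline}}$ are deterministic constants, $\sum_{i=1}^N\delta_i$ is assumed finite and independent of $t$, and $\mathbb{E}\|\xi_t\|\le\sigma\sqrt d$ by Lemma~\ref{app:lemma_expected-gaussian-norm}, so one can take $C=\lambda\epsilon_s+\epsilon_{\mathrm{offline}}+\sum_{i=1}^N\delta_i+\sigma\sqrt d$. The hypothesis $\sup_t\mathbb{E}[\epsilon_t]\le E$ is what rules out model drift growing unboundedly over time, so I would make this dependence explicit in the statement. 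I expect no real obstacle beyond the bookkeeping of separating the $t$-indexed reward gap into the bounded deviation $d_t$ and the bounded driving term $\zeta_t$; the two-contraction structure does all the heavy lifting.
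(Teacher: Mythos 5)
Your proposal is correct and follows essentially the same route as the paper's proof: unroll the recursion from Lemma~\ref{app:lemma_StateDeviationNested}, use $L_f<1$ to get a $t$-uniform bound $\mathbb{E}[d_t]\le CL_f/(1-L_f)$, bound the per-step reward gap by $L_r(\mathbb{E}[d_t]+\mathbb{E}[\zeta_t])$, and sum the discounted geometric series. The only cosmetic difference is that the paper evaluates the series exactly by splitting $\gamma^{t-1}-(\gamma L_f)^{t-1}$ (hence its explicit $\gamma L_f<1$ remark), whereas you use the cruder uniform constant $K/(1-\gamma)$; both yield the same $\mathcal{O}(1)$ conclusion.
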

}

\begin{proof}[\textbf{Proof}]
From Lemma~\ref{app:lemma_StateDeviationNested} we have the one-step recursion
\[
d_{t+1} \le L_f d_t + L_f\,\zeta_t,
\]
where $d_t=\|s_t-s_t^*\|$. Unrolling with $d_0=0$ gives
\[
d_t \le \sum_{j=0}^{t-1} L_f^{\,t-1-j}\,L_f\,\zeta_j
= \sum_{j=0}^{t-1} L_f^{\,t-j}\,\zeta_j.
\]
Taking expectations and using $\mathbb{E}[\zeta_j]\le C$ for all $j$ yields
\[
\mathbb{E}[d_t] \le C \sum_{j=0}^{t-1} L_f^{\,t-j}
= C \sum_{k=1}^{t} L_f^{\,k}
= C\cdot \frac{L_f(1-L_f^{\,t})}{1-L_f}
\le C\cdot\frac{L_f}{1-L_f},
\]
so $\mathbb{E}[d_t]$ is uniformly bounded in $t$.

The per-step expected reward difference satisfies (by $L_r$-Lipschitzness of $r$ and the definition of $\zeta_t$)
\[
\mathbb{E}\big[|r(s_t,a_t)-r(s_t^*,a_t^*)|\big]
\le L_r\big(\mathbb{E}[d_t] + \mathbb{E}[\zeta_t]\big)
\le L_r\left(C\cdot\frac{L_f}{1-L_f} + C\right)
= L_r C\left(\frac{L_f}{1-L_f}+1\right).
\]
Therefore
\begin{align*}
\mathbb{E}[J(\pi^*)-J(\pi)]
&\le L_r \sum_{t\ge 1} \gamma^{t-1} \Big(\mathbb{E}[d_t] + \mathbb{E}[\zeta_t]\Big) \\
&\le L_r C \sum_{t\ge 1} \gamma^{t-1}\left(\frac{L_f(1-L_f^{\,t})}{1-L_f} + 1\right).
\end{align*}
Evaluate the geometric series (using $\gamma L_f<1$):
\[
\sum_{t\ge 1} \gamma^{t-1}\frac{L_f(1-L_f^{\,t})}{1-L_f}
= \frac{L_f}{1-L_f}\sum_{t\ge 1} \big(\gamma^{t-1} - (\gamma L_f)^{t-1}\big)
= \frac{L_f}{1-L_f}\Big(\frac{1}{1-\gamma} - \frac{1}{1-\gamma L_f}\Big),
\]
and
\[
\sum_{t\ge 1}\gamma^{t-1} = \frac{1}{1-\gamma}.
\]
Therefore
\[
\mathbb{E}[J(\pi^*)-J(\pi)]
\le L_r C\left(\frac{L_f}{1-L_f}\Big(\frac{1}{1-\gamma} - \frac{1}{1-\gamma L_f}\Big) + \frac{1}{1-\gamma}\right),
\]
which is a finite constant independent of the rollout horizon $H$. 
Thus, we obtain:
\[
\mathbb{E}[J(\pi^*)-J(\pi)] = \mathcal{O}(1).
\]
\end{proof}

\subsection{Observation perturbation setting}
\label{app:Observation perturbation setting}
Following previous work, GEVRM~\citep{zhang2025gevrm}, we provide details on implementing observation perturbations in the LIBERO simulation platform:
\textbf{1) Image Shift:} The image state is randomly shifted to the upper left, with a maximum translation rate of $0.3$ relative to the image size. 
\textbf{2) Image Rotation:} The image state is randomly rotated counterclockwise with a maximum rotation angle of $30$ degrees.
\textbf{3) Color Jitter:} The image state saturation, brightness, contrast, and sharpness are randomly jittered with a maximum random factor of $3$.
\textbf{4) Image Occlusions:} The image state is randomly occluded with a random number of occlusion blocks ranging from $1$ to $3$ and a maximum length of $20$.
\textbf{5) Image Erasing:} The image state is perturbed by random noise blocks with a maximum size of $0.1$ of the original image.
We add observation perturbations to both the first-view and third-view observation images, which makes the constructed robust benchmark platform more challenging.

\subsection{Implementation Details}
\label{app:Implementation Details}
\textbf{Reward Densification.} Given that only sparse 0-1 rewards representing task success or failure are directly obtained from the environment, we follow the previous work, ReinboT~\citep{zhang2025reinbot}, to densify the rewards. 
Specifically, in addition to the task completion reward obtained from the environment, we consider ten additional rewards, as shown in Tab.~\ref{tab:rewards}. 
The ORB-related reward from ReinboT is not considered, as its computation is somewhat time-consuming during online interaction. 

\textbf{Experimental implementation.} We conduct our experiments on OpenVLA-OFT~\citep{kim2025fine} using the official checkpoints as the base model, with RIPT-VLA’s LoRA adaptor checkpoints~\citep{tan2025interactiveposttrainingvisionlanguageactionmodels} applied for adaptation. 
The implementation is based on the RIPT-VLA codebase~\citep{tan2025interactiveposttrainingvisionlanguageactionmodels}. 
Moreover, $\mathcal{R}_{\text{Jac}}$ requires computing the two-norm of $\nabla_s \log \pi_\theta(a|s)$, which is expensive and memory-intensive to compute directly at the image pixels. 
Thus, we compute the Jacobian on the low-dimensional embeddings obtained from the Llama-2 encoding used by OpenVLA-OFT.
Training is performed on a single GPU with LoRA of rank 32. 
The post-training procedure takes approximately 24 hours for fine-tuning. 
The training parameters are shown in Tab.~\ref{tab_app:hyperparameter}, and other parameters follow the previous work, RIPT-VLA~\citep{tan2025interactiveposttrainingvisionlanguageactionmodels}.
All experiments are conducted on the following hardware:  
\textbf{CPU:} Intel(R) Xeon(R) Platinum 8358 @ 2.60GHz;
\textbf{GPU:} NVIDIA A100-SXM4-80GB.

\textbf{Baseline algorithm reproduction.} To replicate the baseline algorithms, $\pi_0$, ReinboT, RWR, and ARFM are all implemented in the 
repositories: \textit{huggingface/lerobot}~\citep{cadene2024lerobot}. 
To replicate GEVRM, we utilize the same prototype contrastive learning method for observation representation in the RIPT-VLA code.
For other baseline models, we utilize the official implementation.

\subsection{Use of Large Language Models}
During the preparation of this manuscript, we employed OpenAI's ChatGPT (GPT-5) to assist with writing refinement. 
The model was used exclusively for polishing the language, improving clarity, and suggesting minor stylistic alternatives. 
All technical content, including problem formulation, theoretical analysis, algorithm design, and experimental results, was conceived, derived, and validated by the authors. 
The use of ChatGPT did not influence the scientific claims, results, or conclusions presented in this work.

\begin{table}[htbp]
\centering
\footnotesize 
\setlength{\tabcolsep}{1.5pt} 
\caption{Detailed comparison of the average SR ($\%$) of the four LIBERO suites under observation perturbations. The best result is highlighted in bold, and the second-best result is underlined. Here, ``OpenVLA*" denotes the OpenVLA-OFT model, ``Ours" refers to RobustVLA model, and ``Ours-C" refers to RobustVLA-C model.}
\label{tab_app:obs_noise}
\renewcommand{\arraystretch}{1.0} 
\begin{tabular}{l|l|cccccccccc}
\Xhline{1.2pt}
\multirow{2}{*}{\makecell{\textbf{Pertur-} \\ \textbf{bation}}} & \multirow{2}{*}{\centering \textbf{Tasks}} & \multicolumn{10}{c}{\textbf{Models}} \\
\cline{3-12}
& & $\pi_{0}$ & ReinboT & RWR & ARFM & RIPT-VLA & OpenVLA* & OpenVLA & GEVRM & \textbf{Ours} & \textbf{Ours-C} \\
\hline\hline
\multirow{4}{*}{Shift} & Spatial & 35.2 & 21.4 & 18.6 & 11.0 & 51.6 & \underline{54.0} & 38.4 & \textbf{54.3} & \underline{54.0} & 46.4 \\
 & Goal & \underline{32.8} & 23.6 & 25.2 & 20.6 & 17.4 & 19.4 & 20.0 & \textbf{45.0} & 21.6 & 24.8\\
 & Object & 23.4 & 6.0 & 4.6 & 3.6 & 69.8 & 71.4 & 30.8 &46.0 & \underline{72.2}  & \textbf{73.6}\\
 & Long & \underline{33.2} & 28.2 & 16.8 & 12.4 & 19.0 & 15.9 & 12.2 & \textbf{40.2} & 20.4 & 19.6\\
\hline
\multirow{4}{*}{Rotation} & Spatial & 61.8 & 59.6 & 59.4 & 56.8 & 94.2 & 92.6 & 54.0 & 88.1& \textbf{98.4} & \underline{96.4}\\
 & Goal & 68.6 & 42.2 & 45.2 & 42.6 & 87.2 & 89.0 & 80.0 &65.4 &\underline{91.4} & \textbf{92.0} \\
 & Object & 10.4 & 10.0 & 12.0 & 49.4 & 83.6 & 67.0 & 21.0 & 43.2& \textbf{87.4} &\underline{87.2}\\
 & Long & \textbf{57.0} & 36.4 & 31.6 & 31.0 & 51.0 & 51.6 & 11.2 & 35.0& 53.2 & \underline{55.6}\\
\hline
\multirow{4}{*}{Color} & Spatial & 81.2 & 79.0 & 81.8 & 79.8 & 97.2 & \underline{98.2} & 56.4 &86.3 & \textbf{98.6} & 97.4\\
 & Goal & 82.0 & 82.0 & 80.4 & 82.4 & 93.2 & \textbf{94.0} & 48.0 &61.2 & 92.4 &\underline{93.5}\\
 & Object & 92.2 & 78.2 & 78.0 & 79.8 & \underline{97.0} & \underline{97.0} & 52.5 &47.0 & \textbf{97.4} & 96.2\\
 & Long & 58.4 & 58.4 & 58.8 & 59.8 & 93.0 & \textbf{94.4} & 37.2 & 31.0& 93.8 & \underline{94.0}\\
\hline
\multirow{4}{*}{Occlusions} & Spatial & 90.2 & 90.8 & 90.0 & 87.6 & \textbf{98.4} & 97.9 & 87.2 &87.3 & \textbf{98.4} &\underline{98.2}\\
 & Goal & 89.8 & 93.2 & 89.6 & 91.8 & 95.6 & 94.6 & 65.4 &64.0 & \underline{95.0} &\textbf{97.2}\\
 & Object & 95.0 & 94.2 & 94.8 & 95.0 & 96.8 & \underline{97.4} & 81.4 & 48.8 & \textbf{98.0} & 96.8\\
 & Long & 78.0 & 78.2 & 78.2 & 77.6 & 89.2 & \underline{91.0} & 68.0 &39.0 & \textbf{91.2} &89.8\\
\hline
\multirow{4}{*}{Erasing} & Spatial & 85.8 & 85.2 & 86.4 & 86.2 & 97.6 & 97.9 & 85.2 &91.9 & \textbf{98.2} &\underline{98.0}\\
 & Goal & 88.8 & 88.4 & 86.4 & 86.2 & 96.8 & 96.4 & 62.5 &66.8 & \textbf{98.2} &\underline{97.2}\\
 & Object & 92.0 & 89.7 & 89.6 & 92.4 & 96.8 & \underline{97.4} & 78.0 &50.6 & \textbf{97.6} &96.8\\
 & Long & 74.2 & 76.2 & 71.8 & 71.0 & 91.2 & \textbf{94.0} & 68.5 &44.0& 93.4 &\underline{93.8}\\
\hline\hline
\multicolumn{2}{c|}{\textbf{Avg.}} & 66.5 & 61.0 & 59.5 & 60.9 & 80.8 & 80.5 & 47.9 & 56.8 & \textbf{82.6} & \underline{82.2}\\
\Xhline{1.2pt}
\end{tabular}
\end{table}

\begin{table}[htbp]
\caption{Detailed comparison of the average SR($\%$) of the four LIBERO suites under action perturbations (noise level = 0.1, 0.2, 0.3). The best result is highlighted in bold, and the second-best result is underlined. ``OpenVLA*" denotes the OpenVLA-OFT model, ``Ours" refers to RobustVLA model, and ``Ours-C" refers to RobustVLA-C model.
}
\label{tab_app:action_noise}
\centering
\small 
\setlength{\tabcolsep}{1.5pt} 
\begin{tabular}{l ccccc ccccc ccccc}
    \toprule
    \multirow{2}{*}{\textbf{Models}} & \multicolumn{5}{c}{\textbf{Noise Level 0.1}} & \multicolumn{5}{c}{\textbf{Noise Level 0.2}} & \multicolumn{5}{c}{\textbf{Noise Level 0.3}} \\
    \cmidrule(lr){2-6} \cmidrule(lr){7-11} \cmidrule(lr){12-16}

    & Spatial & Object & Goal & Long & Avg. & Spatial & Object & Goal & Long & Avg. & Spatial & Object & Goal & Long & Avg. \\
    \midrule
    OpenVLA      & 61.4          & 57.0           & 50.2          & 46.0          & 53.6             & 61.2          & 42.8          & 36.0          & 16.8          & 39.2          & 27.3          & 7.2           & 11.6          & 0.8           & 11.7 \\
    RIPT-VLA & 96.2          & 80.2          & 85.8          & 76.6          & 84.7          & 66.0          & 47.0          & 50.6          & 28.6          & 48.1         & 30.4          & \underline{18.2}          & 20.0          & 6.6           & 18.8 \\
    OpenVLA*  & 95.6          & 84.4          & 87.0          & \textbf{82.6} & 87.4          & 71.6          & 50.2          & 51.4          & \underline{36.0}          & 52.3          & \underline{36.0}          & 16.4          & 25.4          & 5.0           & 20.7 \\
    $\pi_{0}$          & 81.2          & 79.6          & 87.6          & 61.6          & 77.5          & 47.0          & 40.8          & 56.6          & 25.6          & 42.5          & 21.4          & 17.2          & 29.0          & 6.0           & 18.4 \\
    RWR          & 88.2          & 83.8          & 86.0          & 64.2          & 80.6        & 61.0          & 42.4          & 53.0          & 22.6          & 44.8         & 31.8          & 16.6          & \underline{30.2}          & 4.6           & 20.8 \\
    ReinboT      & 86.2          & 86.0          & 85.6          & 63.2          & 80.3         & 60.2          & 46.4          & \textbf{59.2} & 22.2          & 47.0          & 31.2          & 16.6          & \textbf{30.6} & 6.4           & 21.2 \\
    ARFM         & 87.0          & \underline{87.0}          & 83.4          & 67.0          & 81.1          & 60.2          & 47.0          & \underline{57.4}          & 28.0          & 48.2         & 32.4          & 17.2          & 28.8          & 6.0           & 21.1 \\
    \textbf{Ours}       & \textbf{98.2} & \textbf{87.4} & \textbf{91.0} & 77.2          & \underline{88.5}        & \textbf{73.4} & \underline{50.6}          & 53.8          & 34.6          & \underline{53.1}          & \textbf{36.8} & \textbf{22.4} & 24.2          & \underline{7.4}           & \textbf{22.7} \\
    \textbf{Ours-C}     & \underline{97.0}          & 86.6          & \underline{90.2}          & \underline{81.6}          & \textbf{88.9}& \underline{71.8}          & \textbf{51.4} & 51.8          & \textbf{38.0} & \textbf{53.3}& 34.6          & 18.0          & 25.4          & \textbf{9.0}  & \underline{21.8} \\
    \bottomrule
\end{tabular}
\end{table}

\begin{figure}[tbp]
\centering
\includegraphics[width=1\columnwidth]{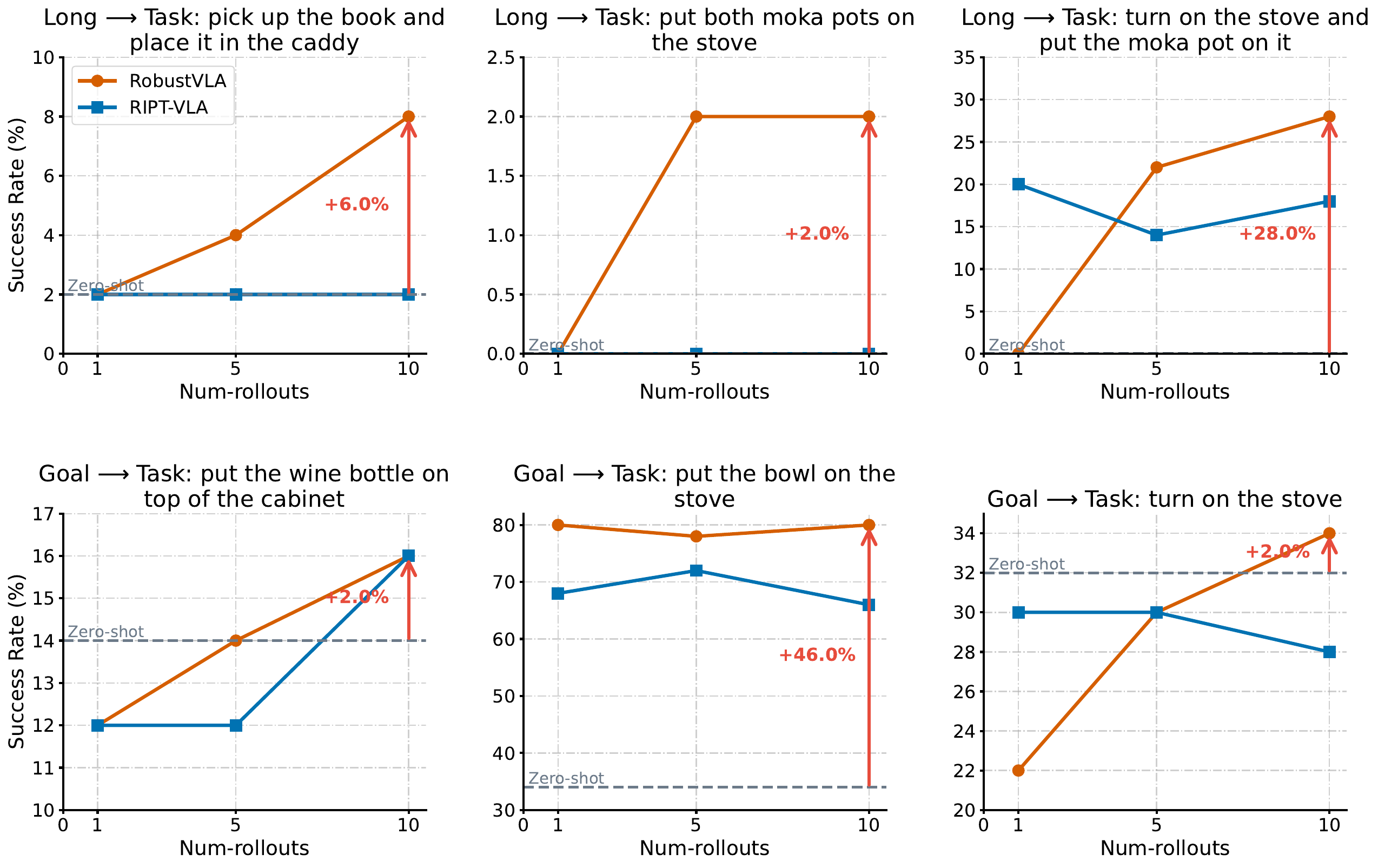}
\caption{More results on comparison of transfer learning capabilities in OOD tasks with environmental uncertainty (image rotation and $0.15$ action noise level).}
  \label{fig:more results}
\end{figure}

{
\begin{figure}[htbp]
\centering
\includegraphics[width=1\columnwidth]{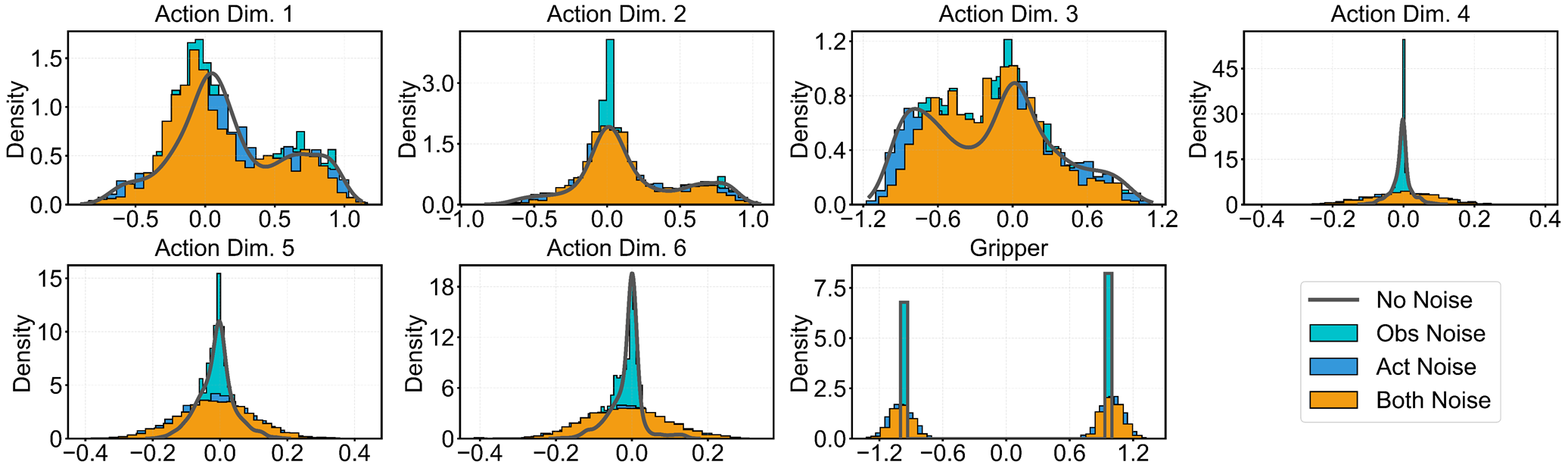}
\caption{
{
Action distribution during model inference under four noise conditions.
}
}
\label{fig:rebuttal_ablation_app_1}
\end{figure}

\begin{figure}[htbp]
\centering
\includegraphics[width=1\columnwidth]{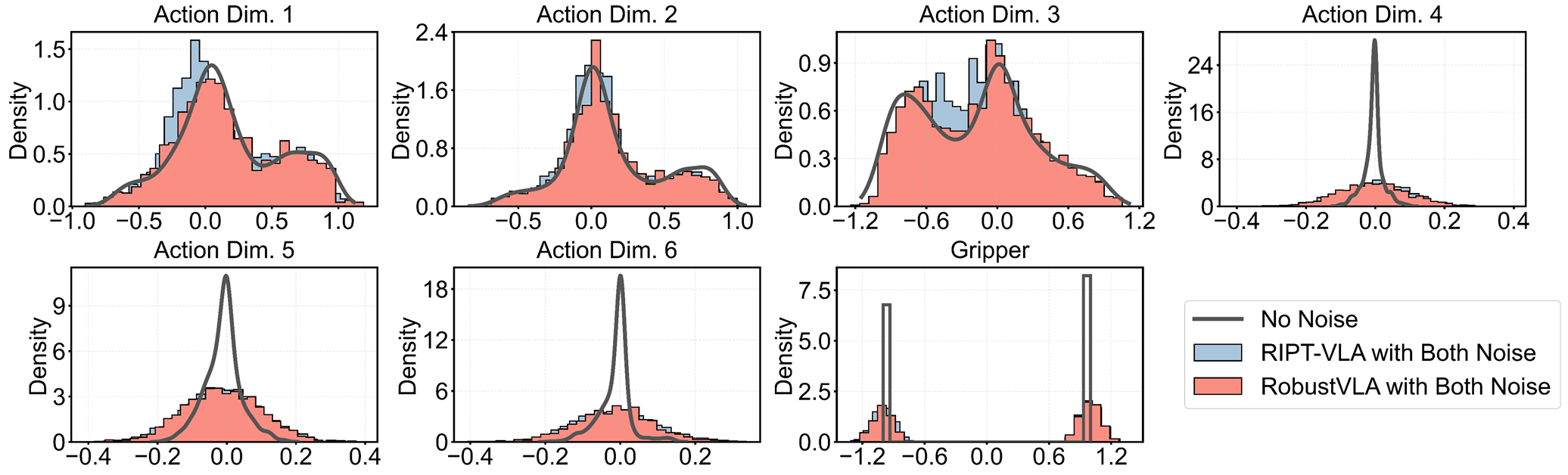}
\caption{
{
Comparison of action distribution between baseline RIPT-VLA and the proposed RobustVLA.
}
}
\label{fig:rebuttal_ablation_app_2}
\end{figure}
}

\begin{table*}[htbp]
    \centering
    \caption{Training hyperparameter configuration.}
    \begin{tabular*}{0.65\textwidth}{c @{\extracolsep{\fill}} c}
        \toprule
        \textbf{Parameter} & \textbf{Value} \\
        \midrule
        \multicolumn{2}{c}{\textit{General}} \\
        \midrule
        LoRA Rank & $32$ \\
        Gradient Accumulation Steps & $1$ \\
        PPO Epochs & $1$ \\
        PPO Clip Range & $0.2$ \\
        PPO Clip High & $0.2$ \\
        Max Step Batch Size & $2$ \\
        Learning Rate (LoRA modules) & $1.0 \times 10^{-4}$ \\
        Learning Rate (Action head) & $5.0 \times 10^{-5}$ \\
        Weight Decay & $1.0 \times 10^{-4}$ \\
        Gradient Clip Norm (model) & $1.0$ \\
        Gradient Clip Norm (header) & $1.0$ \\
        Total Steps $M$  & $12$ \\
        Eval Interval $I_{interval}$ & $1$ \\
        Update Times $N$ & $10$ \\
        \midrule
        \multicolumn{2}{c}{\textit{Robust Regularization Weight}} \\
        \midrule
        Jacobian Regularization Weight $\alpha$ & $0.005$ \\ 
        Smooth Regularization Weight $\beta$ & $0.0005$ \\ 
        \midrule
        \multicolumn{2}{c}{\textit{Curriculum Learning}} \\
        \midrule
        Success Rate Thresholds ($\tau_{low},\tau_{high}$) & $0.6, 0.8$ \\
        Success moving average parameter $\gamma$ & $0.9$ \\
        Observation Noise Range ($\epsilon_{min,obs},\epsilon_{max,obs}$) &  $[0,1]$ \\
        Action Noise Range ($\epsilon_{min,action},\epsilon_{max,action}$)  &  $[0,0.3]$ \\
        Observation Noise Step $\Delta_{obs}$ & $0.2$ \\
        Action Noise Step $\Delta_{action}$ & $0.02$ \\
        Probability of no perturbation & $0.15$ \\
        \bottomrule
    \end{tabular*}
    \label{tab_app:hyperparameter}
\end{table*}

\begin{table*}[htbp]
    \centering
    \caption{Dense reward components and weights.}
    \begin{tabular*}{0.8\textwidth}{c @{\extracolsep{\fill}} c}
        \toprule
        \textbf{Reward Component} & \textbf{Weight} \\
        \midrule
        \multicolumn{2}{c}{\textit{Sub-goal Achievement}} \\
        \midrule
        Image MSE ($e^{f_{\rm \text{MSE}}(o_{t},o_{t}^{\ast})}$)& $0.1/10$ \\
        Image SSIM ($e^{f_{\rm \text{SSIM}}(o_{t},o_{t}^{\ast})}$)& $0.1/10$ \\
        Gripper Image MSE ($e^{f_{\rm \text{MSE}}(o_{t},o_{t}^{\ast})}$)& $0.1/10$ \\
        Gripper Image SSIM ($e^{f_{\rm \text{SSIM}}(o_{t},o_{t}^{\ast})}$)& $0.1/10$ \\
        Joint Position MSE ($e^{f_{\rm \text{MSE}}(s_{t},s_{t}^{\ast})}$)& $0.1/10$ \\
        \midrule
        \multicolumn{2}{c}{\textit{Task Progress}} \\
        \midrule
        Sub-goal Division ($\frac{n(s_t)}{\vert \{ s^{*} \} \vert}$)& $0.1/10$ \\
        \midrule
        \multicolumn{2}{c}{\textit{Behavior Smoothness}} \\
        \midrule
        Joint Velocity ($-|\dot{\mathbf{q}}|^2$)& $0.1/10$ \\
        Joint Acceleration ($-|\ddot{\mathbf{q}}|^2$)& $0.1/10$ \\
        Action Velocity ($-|\mathbf{a}_{t-1}-\mathbf{a}_{t}|^2$)& $0.01/10$ \\
        Action Acceleration ($-|\mathbf{a}_{t-2}-2\mathbf{a}_{t-1}+\mathbf{a}_t|^2$)& $0.01/10$ \\
        \midrule
        \multicolumn{2}{c}{\textit{Task Completion}} \\
        \midrule
        Task Success ($\mathbb{I}\{\text{task}\ \text{is}\ \text{successful}\}$)& $1$  \\
        \bottomrule
    \end{tabular*}
      \label{tab:rewards}
\end{table*}

\end{document}